\newtheorem{theorem}{Theorem}[section] 
\newtheorem{corollary}{Corollary}[theorem]
\newtheorem{proposition}{Proposition}
\title{The Polynomial Stein Discrepancy for Assessing Moment Convergence}
\date{}
\author[$*$,$\dagger$]{Narayan Srinivasan}
\author[$\ddagger$]{Matthew Sutton}
\author[$*$,$\dagger$]{Christopher Drovandi}
\author[$*$,$\dagger$]{Leah F South}
\affil[$*$]{School of Mathematical Sciences, Queensland University of Technology, Australia}
\affil[$\dagger$]{Centre for Data Science, Queensland University of Technology, Australia}
\affil[$\ddagger$]{School of Mathematics and Physics, University of Queensland, Brisbane, Australia}
\begin{document}
\maketitle
\begin{abstract}
We propose a novel method for measuring the discrepancy between a set of samples and a desired posterior distribution for Bayesian inference. Classical methods for assessing sample quality like the effective sample size are not appropriate for scalable Bayesian sampling algorithms, such as stochastic gradient Langevin dynamics, that are asymptotically biased. Instead, the gold standard is to use the kernel Stein Discrepancy (KSD), which is itself not scalable given its quadratic cost in the number of samples. The KSD and its faster extensions also typically suffer from the curse-of-dimensionality and can require extensive tuning. To address these limitations, we develop the polynomial Stein discrepancy (PSD) and an associated goodness-of-fit test. While the new test is not fully convergence-determining, we prove that it detects differences in the first $r$ moments for Gaussian targets. We empirically show that the test has higher power than its competitors in several examples, and at a lower computational cost. Finally, we demonstrate that the PSD can assist practitioners to select hyper-parameters of Bayesian sampling algorithms more efficiently than competitors.
\end{abstract}

\section{Introduction}
\label{intro}


Reliable assessment of posterior approximations was recently named one of three ``grand challenges in Bayesian computation" \citep{ISBABulletin}. Stein discrepancies \citep{gorham_measuring_2015} can assist with this by providing ways to assess whether a distribution $Q$, known via samples $\{x^{(i)}\}_{i=1}^n \sim Q$, is a good approximation to a target distribution $P$. These discrepancies can be applied when $P$ is only known up to a normalising constant, making them particularly appealing in the context of Bayesian inference. Unlike classical tools for assessing sample quality in Bayesian inference, 
Stein discrepancies are applicable to hyperparameter tuning \citep{gorham_measuring_2020} and goodness-of-fit testing \citep{chwialkowski_kernel_2016,liu_kernelized_2016} for biased Monte Carlo algorithms like stochastic gradient Markov chain Monte Carlo \citep[SG-MCMC,][]{welling_bayesian_2011,nemeth_stochastic_2021}.



Currently, the most widely adopted Stein discrepancy is the kernel Stein discrepancy \citep[KSD,][]{gorham_measuring_2020,chwialkowski_kernel_2016,liu_kernelized_2016}, which applies a so-called Stein operator \citep{Stein_bound_1972} to a reproducing kernel Hilbert space. The KSD has an analytically tractable form and can perform well in tuning and goodness-of-fit tasks using the recommended inverse multiquadric (IMQ) base kernel. 
However, the computational cost of KSD is quadratic in the number of samples, $n$,   from the distribution $Q$, so it is infeasible to directly apply KSD in applications where large numbers of MCMC iterations have been used. 
To address this bottleneck, 
\citet{liu_kernelized_2016} introduced an early linear-time alternative. However, the statistic suffers from poor statistical power compared to its quadratic-time counterpart, meaning that the probability of rejecting the null hypothesis when it is false is low. Several approaches have been disseminated in the literature to tackle these problems and reduce the cost of using Stein discrepancies. Notably, the finite set Stein discrepancy \citep[FSSD,][]{jitkrittum_linear-time_2017} and the random feature Stein discrepancy \citep[RFSD,][]{huggins_random_2021} have been proposed as linear-time alternatives to KSD.

The main idea behind FSSD is to use a finite set of test locations to evaluate the Stein witness function, which can be computed in linear-time. The FSSD test can effectively capture the differences between $P$ and $Q$, by optimizing the test locations and kernel bandwidth. 
However, the test is sensitive to the test locations and other tuning parameters, all of which need to be optimized \citep{jitkrittum_linear-time_2017}. The test requires that the samples be split for this optimisation and for evaluation. Additionally, the FSSD experiences a degradation of power relative to the standard KSD in high dimensions \citep{huggins_random_2021}. 

Random Fourier features (RFF) \citep{Rahimi} are a well-established technique to accelerate kernel-based methods. However, it is known \citep[][Proposition 1]{chwialkowski_fasttwosample} that the resulting statistic fails to distinguish a large class of probability measures. \citet {huggins_random_2021} alleviate this by generalising the RFF approach with their RFSD method, which is near-linear-time. The authors identify a family of convergence-determining discrepancy measures that can be accurately approximated with importance sampling. 
Like FSSD, tuning can be challenging for RFSD because of the large number of example-specific choices, including the optimal feature map, the importance sampling distribution, and the number of importance samples. We have also found empirically that the power of the goodness-of-fit test based on RFSD is reduced when direct sampling from $P$ is infeasible.

In addition to these limitations, KSD and current linear-time approximations can fail to detect moment convergence \citep{kanagawa_controlling_2024}. The standard Langevin KSD using the IMQ kernel controls the bounded-Lipschitz metric, which determines weak convergence, but fails to control the convergence in moments. 
This is a significant drawback because moments are often the main expectations of interest and, for many biased MCMC algorithms, they are where bias is likely to appear, as explained in Section \ref{ssec:moments}. \citet{kanagawa_controlling_2024} propose an extended, quadratic-time KSD that is able to control convergence in moments but this method is slower than KSD with an IMQ kernel.

Motivated by the shortcomings of KSD and linear-time KSD methods, we propose a linear-time variant of the KSD named the polynomial Stein Discrepancy (PSD). The method we propose in this article detects discrepancies in moments while still being computable in linear time. This approach, based on the use of $r\,$th order polynomials, is motivated by the zero variance control variates \citep{assaraf,mira_zero_2013} used in variance reduction of Monte Carlo estimates. While PSD is not fully convergence-determining, we show that when $P$ is Gaussian, which is a reasonable approximation for big data applications \citep{Bardenet_2017}, the discrepancy is zero if and only if the first $r$ moments of $P$ and $Q$ match. We empirically show that PSD has good statistical power for detecting discrepancies in moments, including in applications with non-Gaussian $P$, and we also demonstrate its effectiveness for tuning SG-MCMC algorithms. Importantly, the decrease in power with increasing dimension is considerably less than competitors, and the method requires no calibration beyond the choice of the polynomial order, which has a clear interpretation. \citet{SVGD_moment_matching} demonstrate the theoretical advantages of using similar kernels for moment matching in Stein variational gradient descent. 

The paper is organized as follows. Section \ref{basics} sets notation and provides background on Stein discrepancies, KSD and goodness-of-fit tests. The PSD and goodness-of-fit tests based on the same are presented in Section \ref{poly}, along with theoretical results about detecting moment discrepancies and the asymptotic power of the test. 
Section \ref{results} contains simulation studies performed on benchmark examples. The paper is concluded in Section \ref{conclusion}.

\section{Background}

\label{basics}

 
Let the probability measure $Q$, known through the $n$ samples, $\{x^{(i)}\}_{i=1}^n$, be supported in $\mathcal{X}$. Suppose the target distribution $P$ has a corresponding probability density function $p$.

Utilising the notion of integral probability metrics \citep[IPMs,][]{muller_integral_1997}, a discrepancy between $P$ and $Q$ can be defined as
\begin{equation}
    \label{stein}
    d_{\mathcal{H}}(P,Q) :=\sup_{h \in \mathcal{H}}|\mathbb{E}_{X\sim P}[h(X)]-\mathbb{E}_{X\sim Q}[h(X)]|.
\end{equation}

Various IPMs correspond to different choices of function class $\mathcal{H}$ \citep{anastasiou_steins_2021}.  However, since we only have access to the unnormalized density $P$, the first expectation in \eqref{stein} is typically intractable.

\subsection{Stein Discrepancies}

Stein discrepancies make use of a so-called Stein operator ($\mathcal{A}$) and an associated class of functions, $\mathcal{G}(\mathcal{A})$, such that
\begin{equation}
    \label{st}
     \mathbb{E}_{X\sim P} [ (\mathcal{A} g) (X) ] =0 \, \text{for all} \, g \in \mathcal{G}(\mathcal{A}).
\end{equation}
This allows us to define the so-called Stein discrepancy
\begin{equation}
    \label{Stein Discrepancy}
    \mathcal{S}(Q,\mathcal{A},\mathcal{G})=\sup_{g \in \mathcal{G}(\mathcal{A})} \mid \mid \mathbb{E}_{X\sim Q}[(\mathcal{A} g)(X)] \mid \mid.
\end{equation}

There is flexibility in the choice of Stein operator and we focus on operators that are appropriate when $x \in \mathcal{X} \subseteq \mathbb{R}^d$. By considering the generator method of \citet{Barbour1990} applied to overdamped Langevin diffusion \citep{roberts_exponential_1996}, one arrives \citep{gorham_measuring_2015} at the 
second order Langevin-Stein operator defined for real-valued $g$ as
\begin{equation}
    \label{LS operator2}
    \mathcal{A}^{(2)}_x g= \triangle_x g(x)+ \nabla_x g(x)\,\cdot\,\nabla_x \log p(x).
\end{equation}
Under mild conditions on $p$ and $g$, $\mathbb{E}_{X \sim P}[(\mathcal{A}^{(2)}_x g)(X)] = 0$ as required. For example for unconstrained spaces where $\mathcal{X} = \mathbb{R}^d$, if $g$ is twice continuously differentiable, $\log p$ is continuously differentiable and $\|\nabla g(x)\| \leq C \|x\|^{-\delta} p(x)^{-1}$, for some constant $C > 0$ and $\delta > 0$, then $\mathbb{E}_{X \sim P}[(\mathcal{A}^{(2)}_x g)(x)] = 0$
as required \citep{south_semi-exact_2022}.

Alternatively, one can consider a first order Langevin-Stein operator defined for vector-valued $g$ \citep{gorham_measuring_2015}
\begin{equation}
    \label{LS operator}
    \mathcal{A}^{(1)}_x g= \nabla_x\, \cdot\,g(x)+  g(x)\,\cdot \,\nabla_x \log p(x).
\end{equation}
Under regularity conditions on $g$ and $p$ similar to those for \eqref{LS operator2} we get that,  $\mathbb{E}_P [(\mathcal{A}^{(1)}_x g)(X)] =0$.

\subsection{Kernel Stein Discrepancy}

The key idea behind KSD is to write the maximum discrepancy between the target distribution and the observed sample distribution by considering $\mathcal{G}$ in \eqref{Stein Discrepancy} to be functions in a reproducing kernel Hilbert space \citep[RKHS,][]{berlinet_reproducing_2004} corresponding to an appropriately chosen kernel.

A symmetric, positive-definite kernel $k :\mathbb{R}^d \times \mathbb{R}^d \rightarrow \mathbb{R}$ induces an RKHS  $\mathcal{K}_k$ of functions from $\mathbb{R}^d \rightarrow \mathbb{R}$. For any $x \in \mathbb{R}^d$, $k(x,\cdot) \in \mathcal{K}_k$. From the reproducing property of the RKHS, if $f \in \mathcal{K}_k$ then $f(x)= \langle f, k(x,\cdot) \rangle $. 

The Stein set of functions $\mathcal{G}$ with the associated kernel is taken to be the set of vector-valued functions $g$, such that each component $g_j$ belongs to $\mathcal{K}_k$ and the vector of their norms $\| g_j \|_{\mathcal{K}_k}$ belongs to the unit ball, i.e.
$  \mathcal{G} := \{ g=(g_1,\dots,g_d): \, \| v \| \leq 1 \text{ for } v_j = \| g_j \|_{\mathcal{K}_k}\}$. 




Under certain regularity conditions enforced on the choice of the kernel $k$, \citet[Proposition 2,][]{gorham_measuring_2020} and also \citet{liu_kernelized_2016,chwialkowski_kernel_2016} arrive at the closed form representation for the Stein discrepancy given in \eqref{Stein Discrepancy} for this particular choice of Stein set and label it the KSD
\begin{equation}
    \label{Close form}
     \text{KSD} := \mathcal{S}(Q,\mathcal{A},\mathcal{G})=\sqrt{ \mathbb{E}_{x,x' \sim Q} [k_0(x,x')]},
\end{equation}

where in the particular case of the first order operator \eqref{LS operator},
\begin{align*}
    k_0(x,x') 
             &= \nabla_x\,\cdot\,\nabla_x'k(x,x')+ \nabla_x k(x,x') \, \cdot\, u(x') \\ &+ \nabla_{x'} k(x,x')\,\cdot\,u(x) + k(x,x')u(x) \cdot u(x'). 
\end{align*}
Here, $u(x)=\nabla_x \log \,p(x)$ and $k(x,x')$ is the chosen kernel.

\citet[Theorem 6,][]{gorham_measuring_2020} show that KSDs based on common kernel choices like the Gaussian kernel $k(x,x')=\exp(-\frac{1}{2h^2}\| x-x'\|_2^2)$ and the Mat\'{e}rn kernel fail to detect non-convergence for $d\geq 3$. Gaussian kernels are also known to experience rapid decay in statistical power in increasing dimensions for common errors \citep{gorham_measuring_2020}. As an alternative, \citet{gorham_measuring_2020} recommend the IMQ kernel $k(x,x')=(c^2+\| x-x'\|_2^2)^{\beta}$ with $c=1$ and $\beta=-0.5$. They show that IMQ KSD detects convergence and non-convergence for $c>0$ and $\beta\in (-1,0)$ for the class of distantly dissipative $P$ with Lipschitz $\log p$ and $\mathbb{E}_{x \sim P}[\|\nabla_x \log p(x)\|_2^2]<\infty$, and they provide a lower-bound for the KSD in terms of the bounded Lipschitz metric. 

The discrepancy $\mathcal{S}(Q,\tau,\mathcal{G})$ can be estimated with its corresponding U-statistic \citep{serfling_rj_approximation_1980} as follows
\begin{equation}
    \label{close form}
     \widehat{\text{KSD}}^2 = \frac{1}{n(n-1)} \sum_{i=1}^{n} \sum_{\substack{j=1 \\ j \neq i}}^{n} k_0(x^{(i)},x^{(j)}).
\end{equation}
This is the minimum variance unbiased estimator of KSD \citep{liu_kernelized_2016}. Alternatively, one could consider the V-Statistic proposed in \citet{gorham_measuring_2020} and \citet{chwialkowski_kernel_2016}, given by
\begin{equation}    
\widehat{\text{KSD}}^2=\frac{1}{n^2}\sum_{i=1}^n \sum_{j=1}^n k_0(x^{(i)},x^{(j)}).
\end{equation}

The V-statistic, while no longer unbiased, is strictly non-negative and hence can be better suited as a discrepancy metric when compared to the U-statistic.

\subsection{Goodness-of-Fit Testing}


In goodness-of-fit testing, the objective is to test the null hypothesis $H_0$:  $Q=P$ against an alternative hypothesis, typically that $H_1$: $Q \neq P$. Existing goodness-of-fit tests are based on either asymptotic distributions of U-statistics \citep{liu_kernelized_2016,jitkrittum_linear-time_2017,huggins_random_2021} or bootstrapping \citep{liu_kernelized_2016,chwialkowski_kernel_2016}.


Asymptotic goodness-of-fit tests cannot be implemented directly for KSD or its early linear-time alternatives \citep{liu_kernelized_2016,chwialkowski_kernel_2016}. Instead, bootstrapping is used to estimate the distribution of the test statistic under $H_0$. \citet{chwialkowski_kernel_2016} develop a test for potentially correlated samples $\{x^{(i)}\}_{i=1}^n$ using the wild bootstrap procedure \citep{wild}. \citet{liu_kernelized_2016} use the bootstrap procedure of \citet{huskova_consistency_1993,arcones_bootstrap_1992} for degenerate U-statistics, in the setting where the samples are uncorrelated. As the number of samples $n$ and the number $m$ of bootstrap samples go to infinity, \citet{liu_kernelized_2016} and \citet{chwialkowski_kernel_2016} show\footnote{\citet{liu_kernelized_2016} do not appear to directly state the requirement that $n\rightarrow \infty$ in their theorem, but this follows from the proof on which it is based \citep{huskova_consistency_1993}.} that their bootstrap methods have correct type I error rate (i.e.\ correct probability of rejecting the null hypothesis when it is true) and power one.

More recent linear-time alternatives, specifically the FSSD and RFSD, employ tests based on the asymptotic distribution of U-statistics. These methods have similar theoretical guarantees as $n\rightarrow \infty$ and the number $m$ of simulations from the tractable null distribution go to infinity but they can suffer from poor performance in practice. These tests require an estimate for a covariance under $P$. When direct sampling from $P$ is infeasible, which is typically the case when measuring sample quality for Bayesian inference, these methods estimate this covariance using samples from $Q$. While this is asymptotically correct as $n \rightarrow \infty$ \citep{jitkrittum_linear-time_2017,huggins_random_2021}, we have found empirically that the use of samples from $Q$ can substantially reduce the statistical power of the tests, particularly for RFSD.

\section{Polynomial Stein Discrepancy}

\label{poly}

Motivated by the practical effectiveness of polynomial functions in MCMC variance reduction and post-processing \citep{mira_zero_2013,assaraf}, we develop PSD as a linear-time alternative to KSD.

\subsection{Formulation} \label{ssec:formulation}

This section presents an intuitive and straightforward approach to derive the PSD. The corresponding goodness-of-fit tests are presented in Section 
 \ref{test}.

Consider the class of $r$th order polynomials. That is, let 
$\mathcal{G}= \text{span}\{\prod_{i=1}^d x[i]^{\alpha_i}: \alpha \in \mathbb{N}_{0}^d, \sum_{i=1}^d \alpha_i \leq r\}$, where $x[i]$ denotes the $i$th dimension of $x$. Intuitively, $\mathcal{G}$ is the span of $J = \binom{d+r}{d} - 1$ monomial terms that we will simply denote by $P_i(x)$ for $i=1,\ldots,J$. 
This is a valid Stein set in that $\mathbb{E}_P[\mathcal{A}^{(2)}_x g(x)]=0$ for all $g \in \mathcal{G}$ under mild conditions depending on the distribution $P$. For example, when the density of $P$, $p(x)$, is supported on an unbounded set then a sufficient condition is that the tails of $P$ decay faster than an $r$th order polynomial. One can also consider boundary conditions for bounded spaces, as per Proposition 2 of \citet{mira_zero_2013}.


Henceforth we denote the second order operator $\mathcal{A}^{(2)}_x$ as simply $\mathcal{A}$. Using the linearity of the Stein operator \eqref{LS operator2}, the aim is to optimize over different choices $\beta$ with real coefficients $\beta_k$ for $k= 1,2, \ldots,J$ in \eqref{Stein Discrepancy}. 
Analogous to the optimization in KSD, the optimization is constrained over the unit ball, that is $\|\beta\|_2^2 \leq 1$. The result is
\begin{align}
    \text{PSD} &= \sup_{g \in \mathcal{G}} | \mathbb{E}_Q[\mathcal{A}g(x)] |  \nonumber \\ 
        &= \sup_{\beta \in \mathbb{R}^J: \|\beta\|_2 \leq 1} \big| \mathbb{E}_Q\big[\sum_{k=1}^J \beta_k \mathcal{A}P_k(x) \big] \big| \nonumber \\
        &= \sup_{\beta \in \mathbb{R}^J: \|\beta\|_2 \leq 1} \big| \sum_{k=1}^J \beta_k \mathbb{E}_Q \big[\mathcal{A} P_k(x)\big] \big|\nonumber \\
%
%
    \label{PSD}
    &= \sqrt{\sum_{k=1}^J \bar{z}_k^2},
\end{align}
where $\bar{z}_k=\mathbb{E}_Q [\mathcal{A} P_k(X)]$. This derivation is provided in more detail in Appendix \ref{closedform}. For the sample $\{x^{(i)}\}_{i=1}^n$ from $Q$, we have $\bar{z}_k=\frac{1}{n}\sum_{i=1}^n  \mathcal{A} P_k(x^{(i)})$. Note that PSD implicitly depends on a polynomial order $r$ through $J$. A simple form for $\mathcal{A} P_k(x)$ is available in Appendix A of \citet{south_regularized_2022}. 

The squared linear-time solution for PSD can also be expressed as a V-statistic. Specifically
\begin{align*}
\widehat{\text{PSD}}^2_v 
&= \sum_{k=1}^{J} \bar{z}_k^2\\
&= \frac{1}{n^2} \sum_{i=1}^{n} \sum_{j=1}^{n} \Delta(x^{(i)}, x^{(j)}),
\end{align*}
where $\Delta(x, y) = \tau(x)^\top \tau(y)
= \sum_{k=1}^{J} \mathcal{A}_x P_k(x) \mathcal{A}_{y} P_k(y)$ and $[\tau(x)]_k = \mathcal{A} P_k(x)$.

The U-statistic version of the squared PSD, which will be helpful in goodness-of-fit testing, is
\begin{align}
\widehat{\text{PSD}}^2_u    
     &= \frac{1}{n(n-1)} \sum_{i=1}^{n} \sum_{\substack{j=1 \\ j \neq i}}^{n} \Delta(x^{(i)}, x^{(j)}) \nonumber \\
     &=\frac{1}{n(n - 1)}
    \left(
    n^2 \sum_{k=1}^{J} \bar{z}_k^2 - n \sum_{k=1}^{J} \overline{z_k^2}
    \right), \label{ustat}
\end{align}
where $\overline{z_k^2} = \frac{1}{n} \sum_{i=1}^n \left(\mathcal{A} P_k(x^{(i)})\right)^2$.

The computational complexity of this discrepancy is $\mathcal{O}(nJ)=\mathcal{O}(n {d+r \choose d})$. In very high dimensions, practitioners concerned about computational cost could run an approximate version of the test for $\mathcal{O}(ndr)$ by excluding interaction terms from the polynomial, for example for $r=2$, monomial terms $x_i x_j$ would only be included for $i=j$. In the case of a Gaussian $P$ with diagonal covariance matrix, such a discrepancy would detect differences in the marginal moments. 

We note that while it would be possible to implement KSD with the conventional polynomial kernel $k(x,y)=(1+x^\top y)^r$, such discrepancies have not yet been extensively explored in the literature. We show the promise of polynomial kernels in terms of statistical power and linear-time complexity. Our PSD also differs from what one would obtain with a conventional polynomial kernel, offering a simpler formulation that may be more effective for identifying specific moments where discrepancies occur; in the case of a Gaussian $P$ with independent components, monomial terms correspond directly to multi-index moments. We also present novel theory (Proposition \ref{prop:PSD}) that enhances the understanding and interpretability of PSD in the Bayesian big data limit.

\subsection{Goodness-of-fit Test}
\label{test}


For the goodness-of-fit test based on PSD, we will be testing the null hypothesis $H_0$: $Q=P$ against a \textit{composite} or \textit{directional} alternative hypothesis. The form of the alternative hypothesis depends on $P$, but we show in Section \ref{ssec:moments} that for Gaussian P(a suitable assumption under the Bayesian big data/Bernstein-Von-Mises limit), $H_1$ is that the first $r$ moments of $Q$ do not match the first $r$ moments of $P$. Achieving high statistical power for these moments is important, as described in Section \ref{ssec:moments}.

Observe that \eqref{ustat} is a degenerate U-statistic, so an asymptotic test can be developed using $[\tau(x)]_k = \mathcal{A}P_k(x)$.

\begin{corollary}[\citet{jitkrittum_linear-time_2017}]
\label{asymptote}
Let \( Z_1, \ldots, Z_{J} \) be i.i.d. random variables with \( Z_i \sim \mathcal{N}(0, 1) \). Let \( \mu := \mathbb{E}_{x \sim Q}[\tau(x)] \) and \( \Sigma_r := \text{cov}_{x \sim r}[\tau(x)] \in \mathbb{R}^{J \times J} \) for \( r \in \{P,Q\} \). Let \( \{\omega_i\}_{i=1}^{J} \) be the eigenvalues of the covariance matrix \( \Sigma_p = \mathbb{E}_{x \sim p}[\tau(x) \tau^{\top}(x)] \). Assume that \( \mathbb{E}_{x \sim Q} \mathbb{E}_{y \sim Q} \Delta^2(x, y) < \infty \). Then, the following statements hold
\begin{enumerate}
    \item Under \( H_0 : Q=P \),
    \[
    n\widehat{\text{PSD}}^2_u  \xrightarrow{d} \sum_{i=1}^{J} (Z_i^2 - 1) \omega_i.
    \]
    \item Under $H_1$ ($Q \neq P $ in a way specified by \text{PSD}), if \( \sigma^2_{H1} := 4 \mu^{\top} \Sigma_q \mu > 0 \), then
    \[
    \sqrt{n} (\widehat{\text{PSD}}^2_u - \text{PSD}^2) \xrightarrow{d} \mathcal{N}(0, \sigma^2_{H1}).
    \]
\end{enumerate}
\end{corollary}
\begin{proof} This follows immediately from Proposition 2 of \citet{jitkrittum_linear-time_2017}, which itself follows from \citet[Theorem 4.1,][]{liu_kernelized_2016} and \citet[Chapter 5.5,][]{serfling_rj_approximation_1980}.
\end{proof}

A simple approach to testing $H_0$: $Q=P$ would then be to simulate from the stated distribution under the null hypothesis $m$ times and to reject $H_0$ at a significance level of $\alpha$ if the observed test statistic \eqref{ustat} is above the $T_{\alpha} = 100(1-\alpha)$ percentile of the $m$ samples. However, simulating from this distribution can be impractical since $\Sigma_p$ requires samples from the target $P$, which is typically not feasible. Following the suggestion of \citet{jitkrittum_linear-time_2017}, $\Sigma_p$ can be replaced with the plug-in estimate computed from the sample, $\hat{\Sigma}_q := \frac{1}{n} \sum_{i=1}^{n} \tau(x_i) \tau(x_i)^\top - \left[ \frac{1}{n} \sum_{i=1}^{n} \tau(x_i) \right] \left[ \frac{1}{n} \sum_{j=1}^{n} \tau(x_j) \right]^\top$. Theorem 3 of \citet{jitkrittum_linear-time_2017}  ensures that replacing the covariance matrix with $\hat{\Sigma}_q$ still renders a consistent test.

In practice, however, there may be unnecessary degradation of power, especially in high dimensions due to the approximation of the covariance in Theorem \ref{asymptote}. To circumvent this, we propose an alternative test which is to follow the bootstrap procedure suggested by \citet{liu_kernelized_2016}, \citet{arcones_bootstrap_1992}, \citet{huskova_consistency_1993}. This bootstrap test will itself be linear-time; such a method was not available in the linear-time KSD alternatives of \citet{jitkrittum_linear-time_2017} and \citet{huggins_random_2021}.

    


In each bootstrap replicate of this test, one draws weights $(w_1, w_2, \ldots, w_n) \sim \text{Multinomial}(n; \frac{1}{n}, \ldots, \frac{1}{n})$ and computes the bootstrap statistic
\begin{align}
    \label{boot}
    \widehat{\text{PSD}}^2_{u,boot} &= \sum_{i=1}^{n} \sum_{\substack{j=1 \\ i \neq j}}^{n}  (w_i - \frac{1}{n}) (w_j - \frac{1}{n}) \Delta(x^{(i)}, x^{(j)}) \nonumber \\
    \begin{split}
    &= \sum_{k=1}^{J} \left(\sum_{i=1}^{n}  (w_i - \frac{1}{n}) \mathcal{A} P_k(x^{(i)})\right)^2 \\
    &\phantom{=}- \sum_{k=1}^{J} \sum_{i=1}^{n}  \left((w_i - \frac{1}{n}) \mathcal{A} P_k(x^{(i)})\right)^2.
    \end{split}
\end{align}
If the observed test statistic, \eqref{ustat}, is larger than the $100(1-\alpha)$ percentile of the bootstrap statistics, \eqref{boot}, then $H_0$ is rejected. The consistency of this test follows from Theorem 4.3 of \citet{liu_kernelized_2016} and from \citet{huskova_consistency_1993}. We recommend the bootstrap test because we empirically find that it has higher statistical power than the asymptotic test using samples from $Q$. 



The aforementioned tests are suitable for independent samples; in the case of correlated samples, an alternate linear-time bootstrap procedure for PSD can be developed following the method of \citet{chwialkowski_kernel_2016}.

\subsection{Convergence of Moments} \label{ssec:moments}

Since we are using a finite-dimensional (i.e.\ non-characteristic) kernel, PSD is not fully convergence-determining. We do not view this as a major disadvantage. Rather than aiming to detect all possible discrepancies between $P$ and $Q$ and doing so with low statistical power, the method is designed to achieve high statistical power when the discrepancy is in one of the first $r$ moments.

Often, moments of the posterior distribution, such as the mean and variance, are the main expectations of interest. These moments can also be where differences are most likely to appear; for example, the posterior variance is asymptotically over-estimated in SG-MCMC methods \citep{nemeth_stochastic_2021}. Tuning SG-MCMC amounts to selecting the right balance between a large step-size and a small step-size. Large step-sizes lead to over-estimation (bias) in the posterior variance, while small step-sizes may not sufficiently explore the space for a fixed $n$, and thus may lead to under-estimated posterior variance. 
Hence, it is critical to assess the performance of these methods in estimated second-order moments.

\begin{proposition}\label{prop:PSD}
If $P$ is Gaussian with a symmetric positive-definite covariance matrix $\Sigma$, $\text{PSD}=0$ if and only if the multi-index moments of $P$ and $Q$ match up to order $r$.
\end{proposition}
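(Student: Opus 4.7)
The plan is to translate $\text{PSD}=0$ into a linear algebra statement about $\mathcal{A}$ acting on the polynomial space $V_r := \text{span}\{x^\alpha : |\alpha|\le r\}$ of dimension $J+1$, and then to show that for Gaussian $P$ this operator has precisely the constants as its kernel. First, I would note that because $\nabla \log p(x) = -\Sigma^{-1}(x-\mu)$ is linear for $P=\mathcal{N}(\mu,\Sigma)$, each $\mathcal{A}P_k$ lies in $V_r$. The easy direction then drops out: if the first $r$ moments of $Q$ match those of $P$, then $\bar z_k = \mathbb{E}_Q[\mathcal{A}P_k] = \mathbb{E}_P[\mathcal{A}P_k] = 0$ by Stein's identity, so $\text{PSD}=0$.

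For the converse, the core step is to establish the direct-sum decomposition $V_r = \text{span}\{1\} \oplus \text{im}(\mathcal{A}|_\mathcal{G})$. I would prove this by an explicit spectral computation: after orthogonally diagonalizing $\Sigma^{-1} = U\Lambda U^\top$ and translating by $\mu$, the operator in the new coordinates $y$ becomes $\triangle_y - \sum_i \lambda_i y_i \partial_{y_i}$, and a direct computation gives
\begin{equation*}
\mathcal{A}\,y^\alpha \;=\; -\Bigl(\sum_{i=1}^d \lambda_i \alpha_i\Bigr) y^\alpha \;+\; \sum_{i=1}^d \alpha_i(\alpha_i-1)\,y^{\alpha-2e_i}.
\end{equation*}
Ordering the monomial basis by total degree, the matrix of $\mathcal{A}|_{V_r}$ is triangular with diagonal entries $-\sum_i \lambda_i \alpha_i$, which are strictly negative for every $|\alpha|\ge 1$ because $\Sigma\succ 0$ forces all $\lambda_i>0$. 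Hence $\ker(\mathcal{A}|_{V_r}) = \text{span}\{1\}$, and by rank--nullity $\dim\,\text{im}(\mathcal{A}|_\mathcal{G}) = J$. Combined with the observation that $1 \notin \text{im}(\mathcal{A}|_\mathcal{G})$ (else Stein's identity would give $1=\mathbb{E}_P[1]=\mathbb{E}_P[\mathcal{A}g]=0$, a contradiction), the desired decomposition follows.

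To conclude, I would use the decomposition as follows. If $\text{PSD}=0$ then $\mathbb{E}_Q[\mathcal{A}P_k]=0$ for every $k$, so by linearity $\mathbb{E}_Q[\mathcal{A}g]=0$ for every $g\in\mathcal{G}$. Writing an arbitrary monomial $x^\alpha$ with $|\alpha|\le r$ uniquely as $c\cdot 1 + \mathcal{A}g$ and taking expectations under $P$ and $Q$ yields $\mathbb{E}_Q[x^\alpha] = c = \mathbb{E}_P[x^\alpha]$, so all multi-index moments up to order $r$ agree. The main obstacle is the spectral analysis of $\mathcal{A}|_{V_r}$, and in particular the verification that $\alpha=0$ is the only multi-index producing a zero eigenvalue; this is precisely where symmetric positive-definiteness of $\Sigma$ is used, since losing positive-definiteness would allow extra kernel elements and break the equivalence.
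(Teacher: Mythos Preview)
Your argument is correct and takes a genuinely different route from the paper. The paper proceeds by induction on the polynomial order: at each step it subtracts $\mathbb{E}_P[\mathcal{A}x^\alpha]=0$ from $\mathbb{E}_Q[\mathcal{A}x^\alpha]=0$, uses the inductive hypothesis to drop lower-order terms, and is left with a tensor equation of the form $\Sigma^{-1}A + A(\Sigma^{-1}\otimes I\otimes\cdots)+\cdots=0$ for the $r$th-order moment discrepancy $A$; diagonalising $\Sigma^{-1}$ then gives $(\lambda_{z_1}+\cdots+\lambda_{z_r})\beta_{ij}=0$, forcing $A=0$. Your approach instead analyses $\mathcal{A}|_{V_r}$ structurally, recognising it (after the affine change of coordinates) as the Ornstein--Uhlenbeck-type generator $\Delta_y-\sum_i\lambda_i y_i\partial_{y_i}$, reading off its triangular spectrum on monomials, and invoking rank--nullity plus Stein's identity to obtain the clean decomposition $V_r=\text{span}\{1\}\oplus\text{im}(\mathcal{A})$. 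The two arguments use positive-definiteness in the same place---to ensure $\sum_i\lambda_i\alpha_i>0$ for $\alpha\neq 0$---but yours avoids the Kronecker-product bookkeeping and the induction entirely, and makes the connection to the classical spectral theory of the OU generator transparent. The paper's approach, by contrast, is more explicit about how each moment equation constrains the corresponding moment discrepancy, which may be informative if one wants to track how specific terms $\bar z_k$ relate to specific moment mismatches.
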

The proof of Proposition \ref{prop:PSD} is provided in Appendix \ref{proof}. As a consequence of Proposition \ref{prop:PSD}, we have the following result:
\begin{corollary}

Suppose the conditions in Corollary \ref{asymptote} hold. Then, in the Bernstein-von Mises limit (i.e.\ the Bayesian big data limit), the asymptotic and bootstrap tests have power $\rightarrow 1$ for detecting discrepancies in the first $r$ moments of $P$ and $Q$ as $n \rightarrow \infty$. 
\end{corollary}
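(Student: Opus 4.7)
The plan is to combine Proposition \ref{prop:PSD} with the large-sample behavior stated in Corollary \ref{cor:asymptote}. Proposition \ref{prop:PSD} says that, in the Bernstein–von Mises limit, if the first $r$ moments of $P$ and $Q$ differ then $\text{PSD} > 0$, so $\text{PSD}^2$ is a strictly positive constant. The goal is then to show that, under this alternative, the test statistic $n\widehat{\text{PSD}}^2_u$ diverges while the rejection threshold stays $O_p(1)$.

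For the asymptotic test, the critical value $T_\alpha^{(n)}$ is the $(1-\alpha)$-quantile of $\sum_{i=1}^J (Z_i^2-1)\hat{\omega}_i$, where the $\hat{\omega}_i$ are the eigenvalues of the plug-in covariance $\hat{\Sigma}_q$. Under the assumption $\mathbb{E}_{Q\otimes Q}\Delta^2 < \infty$, the strong law gives $\hat{\Sigma}_q \to \Sigma_q$, so by continuity of the spectrum $\hat{\omega}_i \to \omega_i^{(q)}$, and hence $T_\alpha^{(n)}$ converges to the $(1-\alpha)$-quantile of the fixed random variable $\sum_i(Z_i^2-1)\omega_i^{(q)}$, a finite constant. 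Part 2 of Corollary \ref{cor:asymptote} gives $\widehat{\text{PSD}}^2_u = \text{PSD}^2 + O_p(n^{-1/2})$, so $n\widehat{\text{PSD}}^2_u \geq \tfrac{1}{2} n\,\text{PSD}^2$ with probability tending to one, which exceeds any fixed threshold. Hence the rejection probability tends to one.

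For the bootstrap test the argument is analogous: I would invoke the consistency of the multinomial bootstrap for degenerate U-statistics established by \citet{huskova_consistency_1993} and used in \citet[Theorem 4.3]{liu_kernelized_2016}. Conditional on the data, the bootstrap statistic \eqref{boot} converges in distribution to a tight limit (a weighted sum of centered $\chi^2_1$ variables with weights equal to the eigenvalues of $\Sigma_q$), so the bootstrap $(1-\alpha)$-quantile is $O_p(1)$. Again the observed statistic $n\widehat{\text{PSD}}^2_u$ diverges to $+\infty$, yielding power $\to 1$.

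The main technical obstacle is ensuring that the bootstrap critical values remain tight under $H_1$, since the cited bootstrap consistency results are stated under $H_0$. This can be handled directly from the form of \eqref{boot}: each coordinate sum $\sum_i (w_i - 1/n)\,\mathcal{A}P_k(x^{(i)})$ is a centered multinomial bootstrap average and therefore $O_p(1)$ by a Lindeberg-type argument (uniformly in $Q$, using the finite second moment condition), so the whole bootstrap statistic is $O_p(1)$ whether or not $\mathbb{E}_Q[\mathcal{A}P_k(x)]$ vanishes. Combined with the divergence of $n\widehat{\text{PSD}}^2_u$ under the alternative provided by Proposition \ref{prop:PSD}, this delivers the claimed consistency for both tests.
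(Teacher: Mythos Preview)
Your proposal is correct and follows essentially the same approach as the paper: combine Proposition \ref{prop:PSD} (so that $\text{PSD}>0$ under a moment discrepancy) with the consistency of the two tests. The paper's proof is a one-line appeal to the already-cited consistency results (Theorem 3 of \citet{jitkrittum_linear-time_2017} for the plug-in asymptotic test; Theorem 4.3 of \citet{liu_kernelized_2016} and \citet{huskova_consistency_1993} for the bootstrap), whereas you unpack these by showing directly that the critical values are $O_p(1)$ while $n\widehat{\text{PSD}}^2_u\to\infty$ via part 2 of Corollary \ref{cor:asymptote}; this is the same argument at a finer level of detail.
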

\begin{proof} This result follows from the consistency of these tests and the conditions under which $\text{PSD}=0$ as per Proposition \ref{prop:PSD}.
\end{proof}

This is an important result given that biased MCMC algorithms are often used for big-data applications, with subsampling-based methods arguably being the most common application of KSD. In this context, $P$ is often close to Gaussian because typically the ``Bernstein-von Mises approximation of the target posterior distribution is excellent" \citep{Bardenet_2017}. We also empirically show good performance for detecting discrepancies in moments in Section \ref{results} and in supplementary results in the Appendices.




\section{Experiments}
\label{results}

This section demonstrates the performance of PSD on the current benchmark examples from \citet{liu_kernelized_2016}, \citet{chwialkowski_kernel_2016}, \citet{jitkrittum_linear-time_2017} and \citet{huggins_random_2021}. The proposed PSD is compared to existing methods on the basis of runtime, power in goodness-of-fit testing and performance as a sample quality measure. The following methods are the competitors:
\begin{itemize}
    \item IMQ KSD: standard, quadratic time KSD \citep{gorham_measuring_2020,liu_kernelized_2016, chwialkowski_kernel_2016} using the recommended IMQ kernel with $c=1$ and $\beta=-0.5$.
    \item Gauss KSD: standard, quadratic time KSD using the common Gaussian kernel with bandwidth selected using the median heuristic.
    \item FSSD: The linear-time FSSD method with optimized test locations \citep{jitkrittum_linear-time_2017}. We consider the optimized test locations (FSSD-opt), optimized according to a power proxy detailed in \citet{jitkrittum_linear-time_2017}. We set the number of test locations to $10$.
    \item RFSD: The near-linear-time RFSD method \citep{huggins_random_2021}.  Following recommendations by \citet{huggins_random_2021}, we use the $L1$ IMQ base kernel and fix the number of features to $10$. 
\end{itemize}
The simulations are run using the settings and implementations provided by the respective authors, with the exception that we sample from $Q$ for all asymptotic methods since sampling from $P$ is rarely feasible in practice. Goodness-of-fit testing results for PSD in the main paper are with the bootstrap test, which we recommend in general. Results for the PSD asymptotic test with samples from $Q$ are shown in Appendix \ref{Asymptote results}. Following \citet{jitkrittum_linear-time_2017} and \citet{huggins_random_2021}, our bootstrap implementations for KSD and PSD use V-statistics with Rademacher resampling. The performance is similar to the bootstrap described in \citet{liu_kernelized_2016} and in Section \ref{test}. 


Code to reproduce these results is available at \\textcolor{blue}{https://github.com/Nars98/PSD}. This code builds on existing code \citep{rfsd_code,fssd_code} by adding PSD as a new method. All experiments were run on a high performance computing cluster, using a single core for each individual hypothesis test.

Further empirical investigations are available in Appendix \ref{additional empirical}. In Appendix \ref{Kanagawa Example4p1}, we show that PSD with $r=2$ is tracking second order moments well for the second order moment example of  \citet{kanagawa_controlling_2024}. We also provide two logistic regression examples in Appendices \ref{sparse} and \ref{logistic Bvm}, where we demonstrate that PSD can tracks discrepancies in moments for realistic Bayesian inference tasks. 
The former is an example where $P$ is far from normality, while the Bernstein-von Mises approximation is reasonable for the latter. 
We investigate performance of PSD without interactions in Appendix \ref{PSD no-interact} and we find that the performance is remarkably similar to PSD with interactions. Finally, we provide an extreme example where moments are not well approximated and show what expectations are being tracked instead in Appendix \ref{rosenbrock}.

\subsection{Goodness-of-Fit Tests}  \label{ssec:res_gof}
Following standard practice for assessing Stein goodness-of-fit tests, we begin by considering 
\( P = N(0_d, I_d) \) and assessing the performance for a variety of $Q$ and $d$ using statistical tests with significance level $\alpha = 0.05$. 
We use $m=500$ bootstrap samples to estimate the rejection threshold for the PSD and KSD tests.

\begin{figure*}[h]
    \centering
    \begin{subfigure}[t]{0.2\textwidth}
        \includegraphics[height=0.8\textwidth]{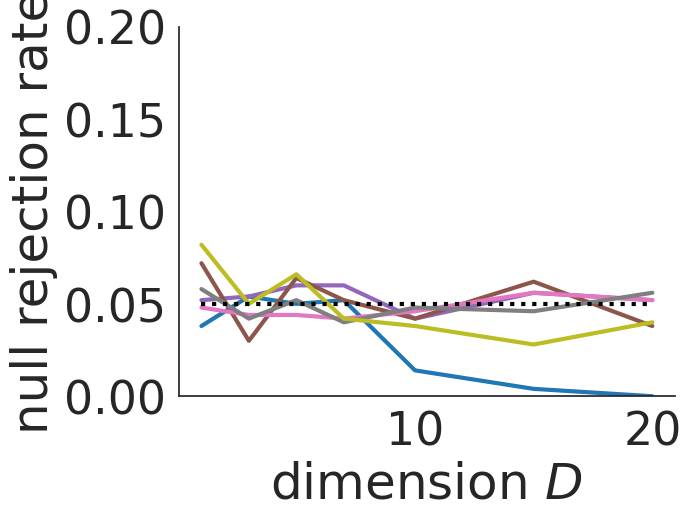}
        \caption{Unit Gaussian (i.e.\ $Q=P$), 500 repeats}
        \label{null}
    \end{subfigure}
    \begin{subfigure}[t]{0.2\textwidth}
        \includegraphics[height=0.8\textwidth]{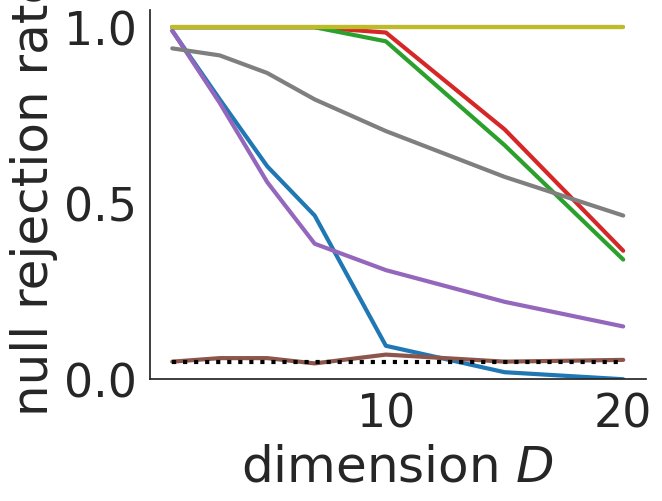}
        \caption{Variance-perturbed Gaussian, 200 repeats}
        \label{variance}
    \end{subfigure}
    \begin{subfigure}[t]{0.2\textwidth}
        \includegraphics[height=0.8\textwidth]{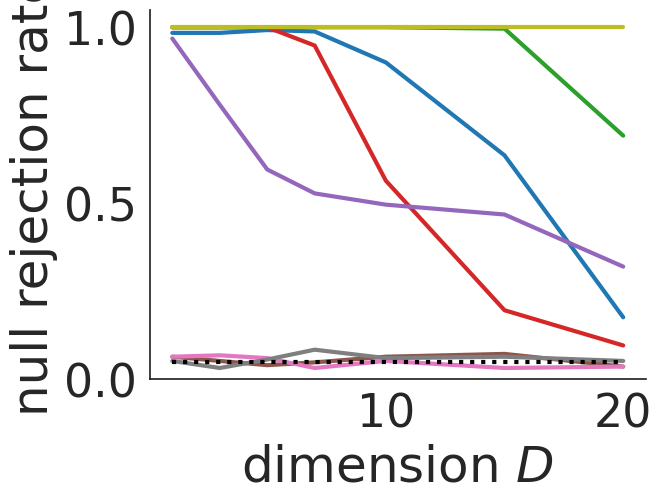}
        \caption{Student-t, 250 repeats}
        \label{student_t}
    \end{subfigure}
    \begin{subfigure}[t]{0.2\textwidth}
        \includegraphics[height=0.8\textwidth]{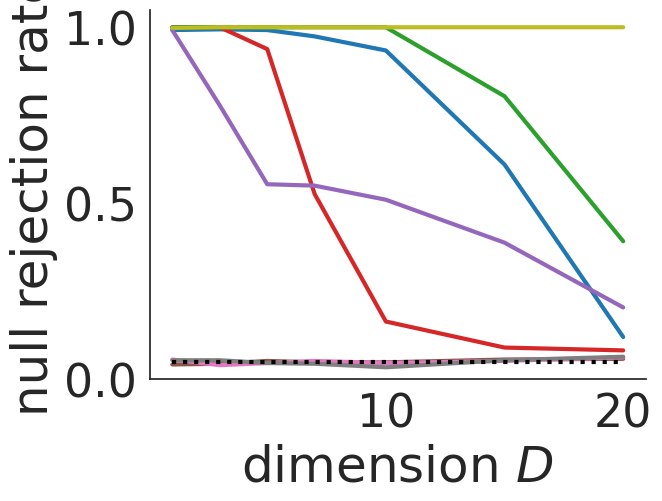}
        \caption{Laplace, 500 repeats}
        \label{lap}
    \end{subfigure}
    \hspace{0.1cm}
    \begin{subfigure}[t]{0.1\textwidth}
        \includegraphics[height=0.12\textheight]{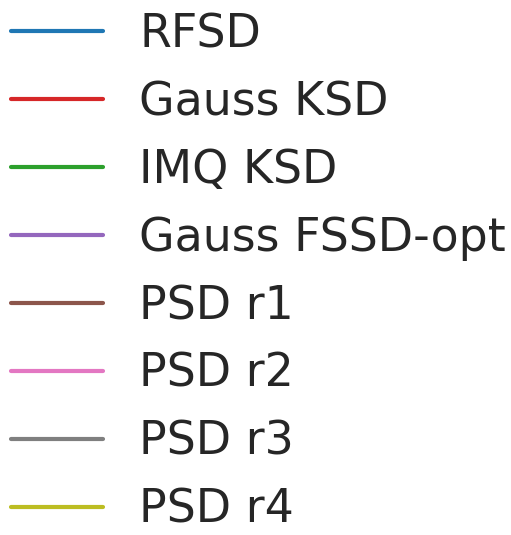}
        
    \end{subfigure}

\caption{Type I error rate (a) and statistical power (b,c,d) for detecting discrepancies between the unit Gaussian $P$ and the sampling distribution $Q$ (see the main text for details).}
\label{fig:goftest}
\end{figure*}


We investigate four cases: (a) type I error rate: $Q = \mathcal{N}(0_d,I_d)$ (b) statistical power for misspecified variance: $Q = \mathcal{N}(0_d,\Sigma)$, where $\Sigma_{ij}=0$ for $i\neq j$, $\Sigma_{11}=1.7$ and $\Sigma_{ii}=1$ for $i=2,\ldots,d$, (c) statistical power for misspecified kurtosis: $Q=\mathcal{T}(0,5)$, a standard multivariate student-t distribution with $5$ degrees of freedom, and (d) statistical power for misspecified kurtosis: $q(x)=\prod_{t=1}^{d} \text{Lap}(x_t \mid 0, \frac{1}{\sqrt{2}})$, the product of $d$ independent Laplace distributions with variance $1$. Following \citet{huggins_random_2021}, all experiments use $n = 1000$
except the multivariate t, which uses $n=2000$.


As seen in Figure \ref{null}, the type I error rate is generally close to 0.05 even with this finite $n$. Exceptions include RFSD which has decaying type I error rate with increasing $d$, and PSD with $r=4$, which has a slightly over-inflated type I error rate for $d=1$.

As expected, PSD with $r=1$ is incapable of detecting discrepancies with the second (b) or fourth order moments (c,d). Similarly, PSD with $r=2$ and $r=3$ are incapable of detecting discrepancies with the fourth moment (c,d). When the polynomial order is at least as high as the order of the moment in which there are discrepancies, PSD outperforms linear-time methods and is competitive with quadratic-time KSD methods. Specifically, PSD with $r=4$ is the only method to consistently achieve a power of $\approx 1$ in Figures \ref{student_t} and \ref{lap}. In Figure \ref{variance}, PSD with $r=2$ and $r=4$ are the only methods to consistently achieve a power of $1$, while PSD with $r=3$ has a higher statistical power than the competitors at $d=20$. Overall, the new methods have a statsistical power up to double the statistical power of IMQ KSD and four times that of linear-time competitors for $d=20$.


Next, following \citet{liu_kernelized_2016} and \citet{jitkrittum_linear-time_2017}, we consider the case where the target \( P \) is the non-normalized density of a restricted Boltzmann machine (RBM); the samples \( \hat{Q}_n \) are obtained from the same RBM perturbed by independent Gaussian noise with variance \( \sigma^2 \). For \( \sigma^2 = 0 \), \( H_0: Q = P \) holds, and for \( \sigma^2 > 0 \) the goal is to detect that the \( n = 1000 \) samples come from the perturbed RBM. 
Similar to the previous goodness-of-fit test, the null rejection rate for a range of perturbations using 100 repeats is given in Table \ref{tab:performance_comparison}. Notably, PSD with $r=2$ and $r=3$ both outperform linear-time methods and are competitive with quadratic-time methods, but for a substantially reduced computational cost. This illustrates that PSD is a potentially valuable tool for goodness-of-fit testing in the non-Gaussian setting.

\begin{table}[h!]
    \centering
    \caption{Null rejection rates for testing methods at different perturbation levels in the RBM example.}
    \begin{tabular}{ c c c c c }
      
        \textbf{PERTURBATION:} & \textbf{0} & \textbf{0.02} & \textbf{0.04} & \textbf{0.06} \\
        \hline
        \textbf{RFSD} & 0.00 & 0.48 & 0.93 & 0.98 \\
        \textbf{FSSD-opt} & 0.05 & 0.70 & 0.96 & 0.99 \\
        \textbf{IMQ KSD} & 0.08 & 0.99 & 1.00 & 1.00 \\
        \textbf{Gauss KSD} & 0.08 & 0.95 & 1.00 & 1.00 \\
        \textbf{PSD r1} & 0.08 & 0.51 & 0.96 & 0.99 \\
        \textbf{PSD r2} & 0.06 & 1.00 & 1.00 & 1.00 \\
        \textbf{PSD r3} & 0.09 & 0.97 & 1.00 & 1.00
    \end{tabular}
\label{tab:performance_comparison}
\end{table}

\subsection{Measure of Sample Quality} \label{ssec:res_measure}

\begin{figure*}[h]
    \centering
    \begin{subfigure}[t]{0.23\textwidth}
        \includegraphics[width=\textwidth]{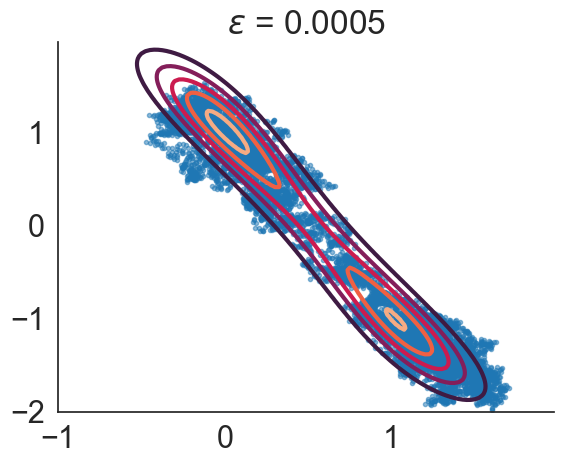}
    \end{subfigure}
    \hspace{0.1cm}
    \begin{subfigure}[t]{0.23\textwidth}
        \includegraphics[width=\textwidth]{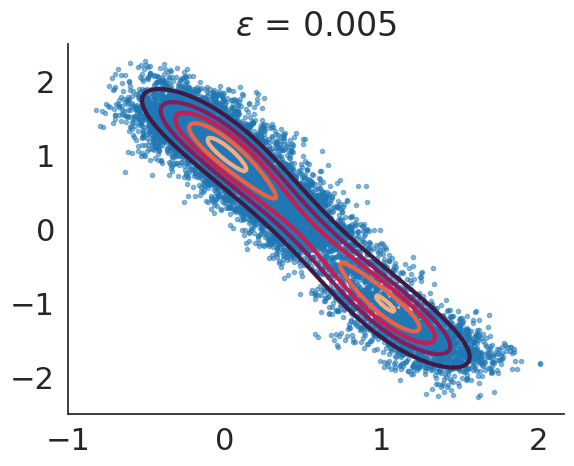}
    \end{subfigure}
    \hspace{0.1cm}
    \begin{subfigure}[t]{0.23\textwidth}
        \includegraphics[width=\textwidth]{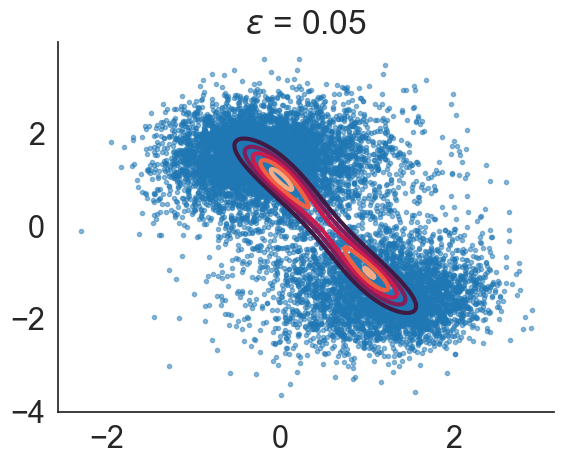}
    \end{subfigure}
    \hspace{0.1cm}
    \begin{subfigure}[t]{0.23\textwidth}
        \includegraphics[width=\textwidth]{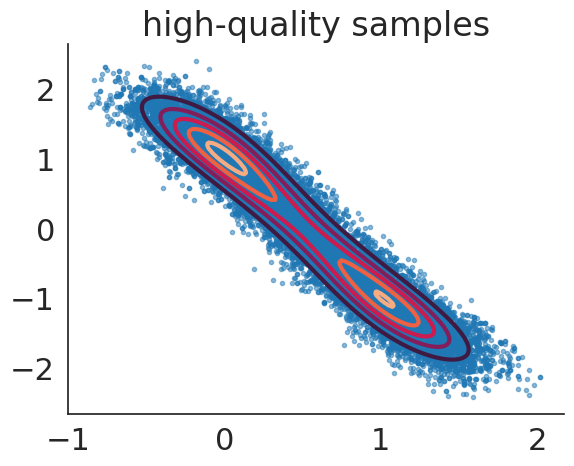}
    \end{subfigure}
    \caption{Approximate posterior for mixture example with SGLD for varying step sizes and when sampling from the true posterior using MALA.}
    \label{Posterior}
\end{figure*}

\begin{figure}[h]
    \centering
    \includegraphics[width=0.4\textwidth]{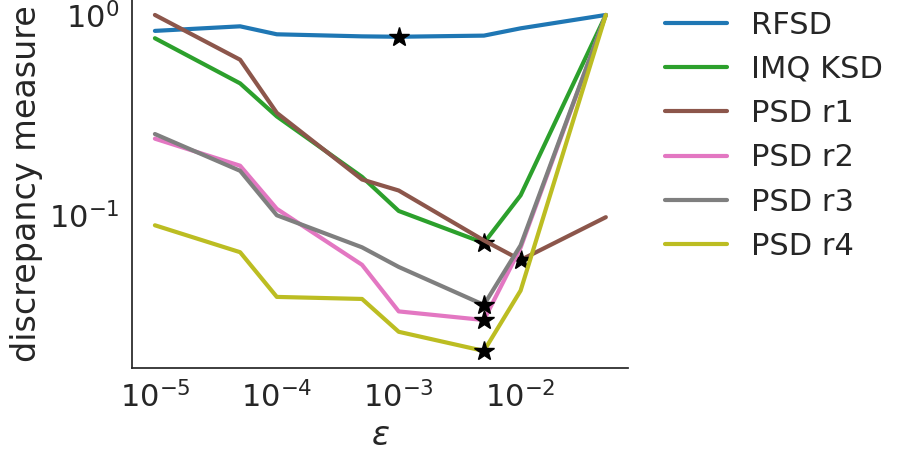}
    \caption{Step size selection results for SGLD using various methods.}
    \label{fig:step}
\end{figure}

To demonstrate the advantages of PSD as a measure of discrepancy we follow the stochastic gradient Langevin dynamics (SGLD) hyper-parameter selection setup from \citet[Section 5.3,][]{gorham_measuring_2015}. Since no Metropolis-Hastings correction is used, SGLD with constant step size $\epsilon$ is a biased MCMC algorithm that aims to approximate the true posterior. Importantly, the stationary distribution of SGLD deviates more from the target as $\epsilon$ grows, leading to an inflated variance. However, smaller $\epsilon$ decreases the mixing speed of SGLD. Hence, an appropriate choice of $\epsilon$ is critical for accurate posterior estimation.

Similar to the experiments considered in \citet{gorham_measuring_2015}, \citet{chwialkowski_kernel_2016} and \citet{huggins_random_2021}, the target $P$ is the bimodal Gaussian mixture model posterior of \citet{welling_bayesian_2011}. We compare the step size selection made by PSD to that of RFSD and IMQ KSD when $n = 10000$ samples are obtained using SGLD. Figure \ref{Posterior} shows the performance of SGLD for a variety of step-sizes in comparison with high quality samples obtained using MALA. Figure \ref{fig:step} shows that PSD with $r=2$, $r=3$ and $r=4$ agree with IMQ KSD, selecting $\epsilon = 0.005$ which is visually optimal as per Figure \ref{Posterior}. Moreover, when utilized as a measure of discrepancy, PSD is around 70 times faster than KSD and around 7 times faster than RFSD.


\subsection{Runtime}

We now compare the computational cost of computing PSD with that of RFSD and IMQ KSD. Datasets of dimension $d = 10$ with the sample size $n$ ranging from $500$ to $10000$ were generated from $P=\mathcal{N}(0_d,I_d)$. As seen in Figure \ref{run time}, even for moderate dataset sizes, the PSD and RFSDs are computed orders of magnitude faster than KSD. While RFSD is faster than PSD with $r=3$ or $r=4$, we have found in Sections \ref{ssec:res_gof} and \ref{ssec:res_measure} that PSD can have higher statistical power for detecting discrepancies in moments.

\begin{figure}[h]
    \centering
    \includegraphics[width=0.35\textwidth]{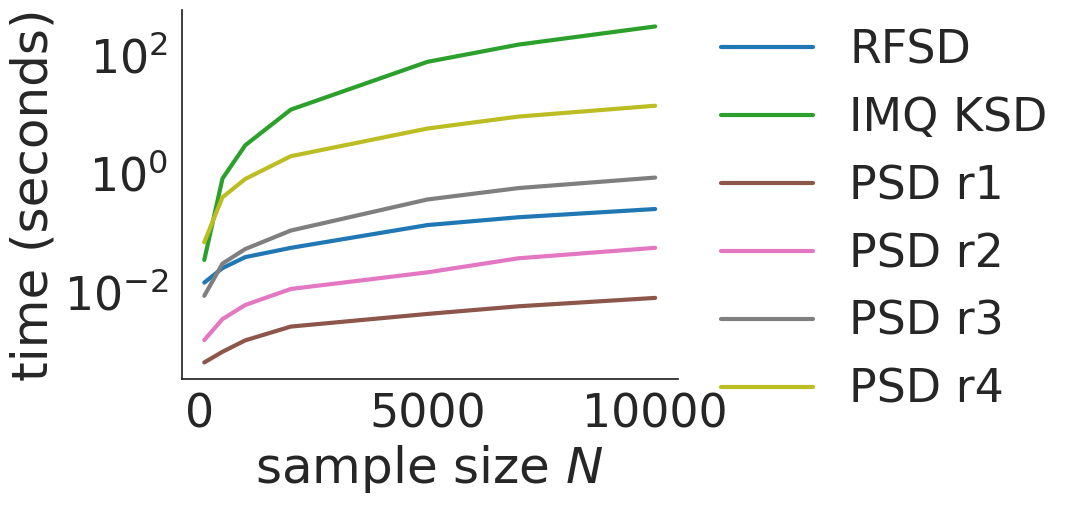}
    \caption{Runtime for various testing methods where $P=\mathcal{N}(0_d,I_d)$ with $d=10$.}
    \label{run time}
\end{figure}

\section{Conclusion}
\label{conclusion}

Our proposed PSD is a powerful measure of sample quality, particularly for detecting discrepancies in moments. The method eliminates the need for extensive tuning, is linear-time and empirically provides high statistical power in high dimensions. This makes it a valuable tool for practitioners needing efficient, dependable measures of sample quality, especially in the context of complex Bayesian inference applications. For practitioners using PSD for biased MCMC samplers like SG-MCMC, we recommend using PSD with $r=2$.

It is well known that probability distributions are completely determined by their moment generating functions, provided they exist (a necessary condition is that all moments are finite). This highlights the main drawback of our method, since we can potentially expect the KSD to outperform PSD for target distributions lacking well-defined moments. For example those with heavy tails such as the Cauchy distribution. However, we also note that the standard KSD (and the linear time variants) also fail for Cauchy distributions, due to Theorem 10 of \citet{gorham_measuring_2015}. In such situations, Stein discrepancies based on diffusion Stein operators can be considered \citep{kanagawa_controlling_2024}. Similar extensions for PSD can be considered in future research

Another situation in which KSD outperforms PSD of order $r$ is when the discrepancies lie in moments higher than the $r$th order moment. For example, KSD outperforms PSD with $r=1,2,3$ for the student-t and Laplace examples in Figure 1 since the discrepancy is in the kurtosis ($4$th order moment). 

As a practical heuristic to determine whether PSD may perform well in detecting moments specifically, one could consider plotting the marginal gradients versus the marginal samples. The closer this is to a perfect linear relationship, the more one might expect the moments to be assessed. Further, if the gradients are available in analytic form, then one can also determine which expectations are being tracked by investigating the system of linear equations in equation \eqref{eqn:Stein} of Appendix \ref{proof}. We do this for the Rosenbrock target in Appendix \ref{rosenbrock}, and we are able to determine exactly which expectations are being tracked. Regardless of whether $P$ is Gaussian, the RBM, SGLD and logistic regression examples show that PSD can be a useful measure of sample quality.

The primary advantage of using the Langevin Stein operator is that, for a Gaussian target, the operator preserves the form of the monomials, and consequently we are tracking convergence in moments. This may not hold true for other Stein operators. Nevertheless, applying the aforementioned diffusion Stein operators and other Stein operators to different types of polynomials or function classes can be studied further in future work.

We show in Appendix \ref{transform} that in the case of Gaussian $P$ (by extension, in the Bernstein-von Mises limit), one can apply any invertible linear transform to the parameters and the method will still detect discrepancies between the original $Q$ and $P$ in the first $r$ moments. We suggest applying whitening or simply standardization when the variances differ substantially across dimensions. Future work could also consider using alternative norms, for example by maximising subject to the constraint that $\|\beta\|_1\leq 1$ or $\|\beta^\top W \beta\|_2 \leq 1$, the latter of which could be used to weight different monomials and therefore different moments.

Since the PSD is not translation invariant, determining the optimal parameterisation for Stein discrepancies is an open problem and an interesting point for further research. We investigate this further in Appendix \ref{mean_shift}.

A further extension could be the use of PSD as a tool to determine the moments in which discrepancies between $Q$ and $P$ are occurring. In particular, we could examine $\tau$ and its distribution under $H_0$ to obtain an ordering of which monomial terms contribute the most to the discrepancy. Finally, theoretical investigations into the topological properties relating to the convergence of PSD can be a potential avenue for future research.

\section*{Acknowledgments}

    Narayan Srinivasan is supported by the Australian Government through a research training program (RTP) stipend. LFS is supported by a Discovery Early Career Researcher Award from the Australian Research Council (DE240101190).  CD is supported by a Future Fellowship from the Australian Research Council (FT210100260). LFS would like to thank Christopher Nemeth and Rista Botha for helpful discussions. Computing resources and services used in this work were provided by the High Performance Computing and Research Support Group, Queensland University of Technology, Brisbane, Australia.

\section*{Impact Statement}
This paper presents work whose goal is to advance the field of Machine Learning. There are many potential societal consequences of our work, none of which we feel must be specifically highlighted here. 

\bibliography{papers}

@inproceedings{SVGD_moment_matching,
 author = {Liu, Qiang and Wang, Dilin},
 booktitle = {Advances in Neural Information Processing Systems},
 editor = {S. Bengio and H. Wallach and H. Larochelle and K. Grauman and N. Cesa-Bianchi and R. Garnett},
 pages = {},
 publisher = {Curran Associates, Inc.},
 title = {Stein Variational Gradient Descent as Moment Matching},
 url = {https://proceedings.neurips.cc/paper_files/paper/2018/file/125b93c9b50703fe9dac43ec231f5f83-Paper.pdf},
 volume = {31},
 year = {2018}
}

@article{Bardenet_2017,
  author  = {R{{\'e}}mi Bardenet and Arnaud Doucet and Chris Holmes},
  title   = {On {M}arkov chain {M}onte {C}arlo methods for tall data},
  journal = {Journal of Machine Learning Research},
  year    = {2017},
  volume  = {18},
  number  = {47},
  pages   = {1--43},
  url     = {http://jmlr.org/papers/v18/15-205.html}
}

@article{SSVS,
author = {George, Edward and McCulloch, Robert},
year = {1993},
month = {09},
pages = {881-889},
title = {Variable Selection Via {G}ibbs Sampling},
volume = {88},
journal = {Journal of The American Statistical Association - J AMER STATIST ASSN},
doi = {10.1080/01621459.1993.10476353}
}

@article{nemeth_stochastic_2021,
	title = {Stochastic Gradient {M}arkov Chain {M}onte {C}arlo},
	volume = {116},
	issn = {0162-1459},
	doi = {10.1080/01621459.2020.1847120},
	abstract = {Markov chain Monte Carlo (MCMC) algorithms are generally regarded as the gold standard technique for Bayesian inference. They are theoretically well-understood and conceptually simple to apply in practice. The drawback of MCMC is that performing exact inference generally requires all of the data to be processed at each iteration of the algorithm. For large datasets, the computational cost of MCMC can be prohibitive, which has led to recent developments in scalable Monte Carlo algorithms that have a significantly lower computational cost than standard MCMC. In this article, we focus on a particular class of scalable Monte Carlo algorithms, stochastic gradient Markov chain Monte Carlo (SGMCMC) which utilizes data subsampling techniques to reduce the per-iteration cost of MCMC. We provide an introduction to some popular SGMCMC algorithms and review the supporting theoretical results, as well as comparing the efficiency of SGMCMC algorithms against MCMC on benchmark examples. The supporting R code is available online at https://github.com/chris-nemeth/sgmcmc-review-paper.},
	number = {533},
	urldate = {2024-02-21},
	journal = {Journal of the American Statistical Association},
	author = {Nemeth, Christopher and Fearnhead, Paul},
	month = jan,
	year = {2021},
	keywords = {Bayesian inference, Markov chain Monte Carlo, Scalable Monte Carlo, Stochastic gradients},
	pages = {433--450},
	file = {Full Text PDF:C\:\\Users\\N11642513\\Zotero\\storage\\MYCWME7V\\Nemeth and Fearnhead - 2021 - Stochastic Gradient Markov Chain Monte Carlo.pdf:application/pdf},
}

@inproceedings{gorham_measuring_2015,
 author = {Gorham, Jackson and Mackey, Lester},
 booktitle = {Advances in Neural Information Processing Systems},
 editor = {C. Cortes and N. Lawrence and D. Lee and M. Sugiyama and R. Garnett},
 pages = {},
 publisher = {Curran Associates, Inc.},
 title = {Measuring Sample Quality with {S}tein's Method},
 url = {https://proceedings.neurips.cc/paper_files/paper/2015/file/698d51a19d8a121ce581499d7b701668-Paper.pdf},
 volume = {28},
 year = {2015}
}

@InProceedings{liu_kernelized_2016,
  title = 	 {A Kernelized {S}tein Discrepancy for Goodness-of-fit Tests},
  author = 	 {Liu, Qiang and Lee, Jason and Jordan, Michael},
  booktitle = 	 {Proceedings of The 33rd International Conference on Machine Learning},
  pages = 	 {276--284},
  year = 	 {2016},
  editor = 	 {Balcan, Maria Florina and Weinberger, Kilian Q.},
  volume = 	 {48},
  series = 	 {Proceedings of Machine Learning Research},
  address = 	 {New York, New York, USA},
  month = 	 {20--22 Jun},
  publisher =    {PMLR},
  pdf = 	 {http://proceedings.mlr.press/v48/liub16.pdf},
  url = 	 {https://proceedings.mlr.press/v48/liub16.html},
  abstract = 	 {We derive a new discrepancy statistic for measuring differences between two probability distributions based on combining Stein’s identity and the reproducing kernel Hilbert space theory. We apply our result to test how well a probabilistic model fits a set of observations, and derive a new class of powerful goodness-of-fit tests that are widely applicable for complex and high dimensional distributions, even for those with computationally intractable normalization constants. Both theoretical and empirical properties of our methods are studied thoroughly.}
}

@article{south_regularized_2022,
author = {L. F. South and C. J. Oates and A. Mira and C. Drovandi},
title = {Regularized Zero-Variance Control Variates},
volume = {18},
journal = {Bayesian Analysis},
number = {3},
publisher = {International Society for Bayesian Analysis},
pages = {865 -- 888},
keywords = {Bayesian inference, controlled thermodynamic integration – CTI, curse of dimensionality, Markov chain Monte Carlo simulation, MCMC, Monte Carlo simulations, penalized regression, sequential Monte Carlo – SMC, Stein operator, variance reduction},
year = {2023},
doi = {10.1214/22-BA1328},
URL = {https://doi.org/10.1214/22-BA1328}
}

@article{anastasiou_steins_2021,
author = {Andreas Anastasiou and Alessandro Barp and Fran{\c{c}}ois-Xavier Briol and Bruno Ebner and Robert E. Gaunt and Fatemeh Ghaderinezhad and Jackson Gorham and Arthur Gretton and Christophe Ley and Qiang Liu and Lester Mackey and Chris J. Oates and Gesine Reinert and Yvik Swan},
title = {{S}tein’s Method Meets Computational Statistics: A Review of Some Recent Developments},
volume = {38},
journal = {Statistical Science},
number = {1},
publisher = {Institute of Mathematical Statistics},
pages = {120 -- 139},
keywords = {approximate Markov chain Monte Carlo, control variates, goodness-of-fit testing, likelihood ratio, maximum likelihood estimator, prior sensitivity, sample quality, Stein’s method, variational inference},
year = {2023},
doi = {10.1214/22-STS863},
URL = {https://doi.org/10.1214/22-STS863}
}

@book{berlinet_reproducing_2004,
	address = {Boston, MA},
	title = {Reproducing {Kernel} {Hilbert} {Spaces} in {Probability} and {Statistics}},
	isbn = {978-1-4613-4792-7 978-1-4419-9096-9},
	language = {en},
	urldate = {2024-02-21},
	publisher = {Springer US},
	author = {Berlinet, Alain and Thomas-Agnan, Christine},
	year = {2004},
	doi = {10.1007/978-1-4419-9096-9},
	keywords = {mathematics, probability, statistics, Stochastic Processes},
	file = {Full Text PDF:C\:\\Users\\N11642513\\Zotero\\storage\\IUN4ZQXH\\Berlinet and Thomas-Agnan - 2004 - Reproducing Kernel Hilbert Spaces in Probability a.pdf:application/pdf},
}

@InProceedings{chwialkowski_kernel_2016,
  title = 	 {A Kernel Test of Goodness of Fit},
  author = 	 {Chwialkowski, Kacper and Strathmann, Heiko and Gretton, Arthur},
  booktitle = 	 {Proceedings of The 33rd International Conference on Machine Learning},
  pages = 	 {2606--2615},
  year = 	 {2016},
  editor = 	 {Balcan, Maria Florina and Weinberger, Kilian Q.},
  volume = 	 {48},
  series = 	 {Proceedings of Machine Learning Research},
  address = 	 {New York, New York, USA},
  month = 	 {20--22 Jun},
  publisher =    {PMLR},
  pdf = 	 {http://proceedings.mlr.press/v48/chwialkowski16.pdf},
  url = 	 {https://proceedings.mlr.press/v48/chwialkowski16.html},
  abstract = 	 {We propose a nonparametric statistical test for goodness-of-fit: given a set of samples, the test determines how likely it is that these were generated from a target density function. The measure of goodness-of-fit is a divergence constructed via Stein’s method using functions from a Reproducing Kernel Hilbert Space. Our test statistic is based on an empirical estimate of this divergence, taking the form of a V-statistic in terms of the log gradients of the target density and the kernel. We derive a statistical test, both for i.i.d. and non-i.i.d. samples, where we estimate the null distribution quantiles using a wild bootstrap procedure. We apply our test to quantifying convergence of approximate Markov Chain Monte Carlo methods, statistical model criticism, and evaluating quality of fit vs model complexity in nonparametric density estimation.}
}

@article{huskova_consistency_1993,
 ISSN = {00905364, 21688966},
 URL = {http://www.jstor.org/stable/2242318},
 abstract = {A generalized bootstrap version is defined for degenerate U-statistics. Our main result shows that the (conditional) distribution function of the bootstrapped degenerate U-statistic provides a consistent estimator for the unknown distribution function of the degenerate U-statistic under consideration. For the proof we rely on a rank statistic approach.},
 author = {Marie Hušková and Paul Janssen},
 journal = {The Annals of Statistics},
 number = {4},
 pages = {1811--1823},
 publisher = {Institute of Mathematical Statistics},
 title = {Consistency of the Generalized Bootstrap for Degenerate U-Statistics},
 urldate = {2024-10-09},
 volume = {21},
 year = {1993}
}

@article{south_semi-exact_2022,
	title = {Semi-exact control functionals from {Sard}’s method},
	volume = {109},
	issn = {1464-3510},
	doi = {10.1093/biomet/asab036},
	abstract = {A novel control variate technique is proposed for the post-processing of Markov chain Monte Carlo output, based on both Stein’s method and an approach to numerical integration due to Sard. The resulting estimators of posterior expected quantities of interest are proven to be polynomially exact in the Gaussian context, while empirical results suggest that the estimators approximate a Gaussian cubature method near the Bernstein–von Mises limit. The main theoretical result establishes a bias-correction property in settings where the Markov chain does not leave the posterior invariant. Empirical results across a selection of Bayesian inference tasks are presented.},
	number = {2},
	urldate = {2024-02-22},
	journal = {Biometrika},
	author = {South, L F and Karvonen, T and Nemeth, C and Girolami, M and Oates, C J},
	month = jun,
	year = {2022},
	pages = {351--367},
	file = {Full Text PDF:C\:\\Users\\N11642513\\Zotero\\storage\\EMNLIZH6\\South et al. - 2022 - Semi-exact control functionals from Sard’s method.pdf:application/pdf},
}

@article{arcones_bootstrap_1992,
	title = {On the {Bootstrap} of {U} and {V} {Statistics}},
	volume = {20},
	issn = {0090-5364},
	abstract = {Bootstrap distributional limit theorems for U and V statistics are proved. They hold a.s., under weak moment conditions and without restrictions on the bootstrap sample size (as long as it tends to ∞), regardless of the degree of degeneracy of U and V. A testing procedure based on these results is outlined.},
	number = {2},
	urldate = {2024-02-22},
	journal = {The Annals of Statistics},
	author = {Arcones, Miguel A. and Gine, Evarist},
	year = {1992},
	pages = {655--674},
	file = {JSTOR Full Text PDF:C\:\\Users\\N11642513\\Zotero\\storage\\Z5FGKFQS\\Arcones and Gine - 1992 - On the Bootstrap of U and V Statistics.pdf:application/pdf},
}

@book{serfling_rj_approximation_1980,
  title={Approximation theorems of mathematical statistics},
  author={Serfling, Robert J},
  year={2009},
  publisher={John Wiley \& Sons}
}

@InProceedings{gorham_measuring_2020,
  title = 	 {Measuring Sample Quality with Kernels},
  author =       {Jackson Gorham and Lester Mackey},
  booktitle = 	 {Proceedings of the 34th International Conference on Machine Learning},
  pages = 	 {1292--1301},
  year = 	 {2017},
  editor = 	 {Precup, Doina and Teh, Yee Whye},
  volume = 	 {70},
  series = 	 {Proceedings of Machine Learning Research},
  month = 	 {06--11 Aug},
  publisher =    {PMLR},
  pdf = 	 {http://proceedings.mlr.press/v70/gorham17a/gorham17a.pdf},
  url = 	 {https://proceedings.mlr.press/v70/gorham17a.html},
  abstract = 	 {Approximate Markov chain Monte Carlo (MCMC) offers the promise of more rapid sampling at the cost of more biased inference. Since standard MCMC diagnostics fail to detect these biases, researchers have developed computable Stein discrepancy measures that provably determine the convergence of a sample to its target distribution. This approach was recently combined with the theory of reproducing kernels to define a closed-form kernel Stein discrepancy (KSD) computable by summing kernel evaluations across pairs of sample points. We develop a theory of weak convergence for KSDs based on Stein’s method, demonstrate that commonly used KSDs fail to detect non-convergence even for Gaussian targets, and show that kernels with slowly decaying tails provably determine convergence for a large class of target distributions. The resulting convergence-determining KSDs are suitable for comparing biased, exact, and deterministic sample sequences and simpler to compute and parallelize than alternative Stein discrepancies. We use our tools to compare biased samplers, select sampler hyperparameters, and improve upon existing KSD approaches to one-sample hypothesis testing and sample quality improvement.}
}

@article{mira_zero_2013,
	title = {Zero Variance {M}arkov Chain {M}onte {C}arlo for {B}ayesian Estimators},
	volume = {23},
	issn = {0960-3174, 1573-1375},
	doi = {10.1007/s11222-012-9344-6},
	abstract = {Interest is in evaluating, by Markov chain Monte Carlo (MCMC) simulation, the expected value of a function with respect to a, possibly unnormalized, probability distribution. A general purpose variance reduction technique for the MCMC estimator, based on the zero-variance principle introduced in the physics literature, is proposed. Conditions for asymptotic unbiasedness of the zero-variance estimator are derived. A central limit theorem is also proved under regularity conditions. The potential of the idea is illustrated with real applications to probit, logit and GARCH Bayesian models. For all these models, a central limit theorem and unbiasedness for the zero-variance estimator are proved (see the supplementary material available on-line).},
	number = {5},
	urldate = {2024-02-22},
	journal = {Statistics and Computing},
	author = {Mira, Antonietta and Solgi, Reza and Imparato, Daniele},
	month = sep,
	year = {2013},
	keywords = {Statistics - Computation, 62},
	pages = {653--662},
	annote = {Comment: 26 pages, 4 figures. This is an updated version: the results are the same as the previous one, but presentation is more essential},
	file = {arXiv Fulltext PDF:C\:\\Users\\N11642513\\Zotero\\storage\\PGQ7WTQ6\\Mira et al. - 2013 - Zero Variance Markov Chain Monte Carlo for Bayesia.pdf:application/pdf;arXiv.org Snapshot:C\:\\Users\\N11642513\\Zotero\\storage\\GLW9S4UL\\1012.html:text/html},
}

@article{muller_integral_1997,
	title = {Integral Probability Metrics and Their Generating Classes of Functions},
	volume = {29},
	issn = {0001-8678},
	doi = {10.2307/1428011},
	abstract = {We consider probability metrics of the following type: for a class F of functions and probability measures P, Q we define \$d\_\{{\textbackslash}germ\{F\}\}(P,{\textbackslash} Q){\textbackslash}coloneq {\textbackslash}text\{sup\}\_\{f{\textbackslash}in {\textbackslash}germ\{F\}\}{\textbar}{\textbackslash}int f{\textbackslash},dP-{\textbackslash}int f{\textbackslash},dQ{\textbar}\$. A unified study of such integral probability metrics is given. We characterize the maximal class of functions that generates such a metric. Further, we show how some interesting properties of these probability metrics arise directly from conditions on the generating class of functions. The results are illustrated by several examples, including the Kolmogorov metric, the Dudley metric and the stop-loss metric.},
	number = {2},
	urldate = {2024-03-01},
	journal = {Advances in Applied Probability},
	author = {Müller, Alfred},
	year = {1997},
	pages = {429--443},
	file = {JSTOR Full Text PDF:C\:\\Users\\N11642513\\Zotero\\storage\\X25ZPANJ\\Müller - 1997 - Integral Probability Metrics and Their Generating .pdf:application/pdf},
}

@inproceedings{welling_bayesian_2011,
    author = {Welling, M and Teh, Y},
    title = {Bayesian learning via stochastic gradient {L}angevin dynamics},
    booktitle = {Proceedings of the 28th International Conference on Machine Learning},
    pages = {681-688},
    year = {2011}
}

@inproceedings{huggins_random_2021,
 author = {Huggins, Jonathan and Mackey, Lester},
 booktitle = {Advances in Neural Information Processing Systems},
 editor = {S. Bengio and H. Wallach and H. Larochelle and K. Grauman and N. Cesa-Bianchi and R. Garnett},
 pages = {},
 publisher = {Curran Associates, Inc.},
 title = {Random Feature {S}tein Discrepancies},
 url = {https://proceedings.neurips.cc/paper_files/paper/2018/file/0f840be9b8db4d3fbd5ba2ce59211f55-Paper.pdf},
 volume = {31},
 year = {2018}
}

@inproceedings{jitkrittum_linear-time_2017,
 author = {Jitkrittum, Wittawat and Xu, Wenkai and Szabo, Zoltan and Fukumizu, Kenji and Gretton, Arthur},
 booktitle = {Advances in Neural Information Processing Systems},
 editor = {I. Guyon and U. Von Luxburg and S. Bengio and H. Wallach and R. Fergus and S. Vishwanathan and R. Garnett},
 pages = {},
 publisher = {Curran Associates, Inc.},
 title = {A Linear-Time Kernel Goodness-of-Fit Test},
 url = {https://proceedings.neurips.cc/paper_files/paper/2017/file/979d472a84804b9f647bc185a877a8b5-Paper.pdf},
 volume = {30},
 year = {2017}
}

@article{kanagawa_controlling_2024,
  title={Controlling moments with kernel {S}tein discrepancies},
  author={Kanagawa, Heishiro and Barp, Alessandro and Gretton, Arthur and Mackey, Lester},
  journal={arXiv preprint arXiv:2211.05408},
  year={2022}
}

@article{roberts_exponential_1996,
	title = {Exponential Convergence of {L}angevin Distributions and Their Discrete Approximations},
	volume = {2},
	issn = {1350-7265},
	doi = {10.2307/3318418},
	abstract = {In this paper we consider a continuous-time method of approximating a given distribution π using the Langevin diffusion d Lt= d Wt+1/2∇ log π ( Lt) dt. We find conditions under this diffusion converges exponentially quickly to π or does not: in one dimension, these are essentially that for distributions with exponential tails of the form π(x) ∝ exp(-γ {\textbar}x{\textbar}β), 0 {\textless} β {\textless} ∞, exponential convergence occurs if and only if β ≥ 1. We then consider conditions under which the discrete approximations to the diffusion converge. We first show that even when the diffusion itself converges, naive discretizations need not do so. We then consider a 'Metropolis-adjusted' version of the algorithm, and find conditions under which this also converges at an exponential rate: perhaps surprisingly, even the Metropolized version need not converge exponentially fast even if the diffusion does. We briefly discuss a truncated form of the algorithm which, in practice, should avoid the difficulties of the other forms.},
	number = {4},
	urldate = {2024-03-01},
	journal = {Bernoulli},
	author = {Roberts, Gareth O. and Tweedie, Richard L.},
	year = {1996},
	pages = {341--363},
	file = {JSTOR Full Text PDF:C\:\\Users\\N11642513\\Zotero\\storage\\F57T28V3\\Roberts and Tweedie - 1996 - Exponential Convergence of Langevin Distributions .pdf:application/pdf},
}

@incollection{stein_bound_1972,
	title = {A bound for the error in the normal approximation to the distribution of a sum of dependent random variables},
	volume = {6.2},
	urldate = {2024-03-01},
	booktitle = {Proceedings of the {Sixth} {Berkeley} {Symposium} on {Mathematical} {Statistics} and {Probability}, {Volume} 2: {Probability} {Theory}},
	publisher = {University of California Press},
	author = {Stein, Charles},
	month = jan,
	year = {1972},
	pages = {583--603},
	file = {Full Text PDF:C\:\\Users\\N11642513\\Zotero\\storage\\M5V7398P\\Stein - 1972 - A bound for the error in the normal approximation .pdf:application/pdf},
}

@inproceedings{Rahimi,
 author = {Rahimi, Ali and Recht, Benjamin},
 booktitle = {Advances in Neural Information Processing Systems},
 editor = {J. Platt and D. Koller and Y. Singer and S. Roweis},
 pages = {},
 publisher = {Curran Associates, Inc.},
 title = {Random Features for Large-Scale Kernel Machines},
 url = {https://proceedings.neurips.cc/paper_files/paper/2007/file/013a006f03dbc5392effeb8f18fda755-Paper.pdf},
 volume = {20},
 year = {2007}
}

@article{assaraf,
author = {Assaraf, Roland and Caffarel, Michel},
year = {1999},
month = {11},
title = {Zero-Variance Principle for {M}onte {C}arlo Algorithms},
volume = {83},
number = {23},
pages = {4682--4685},
journal = {Physical Review Letters},
doi = {10.1103/PhysRevLett.83.4682}
}

@article{Barbour1990,
  title={Stein's method for diffusion approximations},
  author={Barbour, Andrew D},
  journal={Probability theory and related fields},
  volume={84},
  number={3},
  pages={297--322},
  year={1990},
  publisher={Springer}
}

@inproceedings{chwialkowski_fasttwosample,
 author = {Chwialkowski, Kacper P and Ramdas, Aaditya and Sejdinovic, Dino and Gretton, Arthur},
 booktitle = {Advances in Neural Information Processing Systems},
 editor = {C. Cortes and N. Lawrence and D. Lee and M. Sugiyama and R. Garnett},
 pages = {},
 publisher = {Curran Associates, Inc.},
 title = {Fast Two-Sample Testing with Analytic Representations of Probability Measures},
 url = {https://proceedings.neurips.cc/paper_files/paper/2015/file/b571ecea16a9824023ee1af16897a582-Paper.pdf},
 volume = {28},
 year = {2015}
}

@article{ISBABulletin,
author = {Anirban Bhattacharya and Antonio Linero and Chris J Oates},
title = {The {ISBA} Bulletin},
journal = {International Society for Bayesian Analysis},
volume = {31},
number = {3},
pages = {11--14},
year = {2024}
}

@misc{rfsd_code,
  author = {Jonathan Huggins},
  title = {rfsd Package},
  year  = {2018},
  note   = {https://bitbucket.org/jhhuggins/random-feature-stein-discrepancies/src/master/},
}

@misc{fssd_code,
  author = {Wittawat Jitkrittum},
  title = {kernel-gof package},
  year  = {2019},
  note   = {https://github.com/wittawatj/kernel-gof},
}

@article{wild,
author = {Leucht, Anne and Neumann, Michael},
year = {2013},
month = {05},
pages = {257–280},
title = {Dependent wild bootstrap for degenerate {U}- and {V}-statistics},
volume = {117},
journal = {Journal of Multivariate Analysis},
doi = {10.1016/j.jmva.2013.03.003}
}

@article{rosenbrock_benchmark,
	title = {An n-dimensional {Rosenbrock} distribution for {Markov} chain {Monte} {Carlo} testing},
	volume = {49},
	copyright = {© 2021 The Authors. Scandinavian Journal of Statistics published by John Wiley \& Sons Ltd on behalf of The Board of the Foundation of the Scandinavian Journal of Statistics.},
	issn = {1467-9469},
	url = {https://onlinelibrary.wiley.com/doi/abs/10.1111/sjos.12532},
	doi = {10.1111/sjos.12532},
	abstract = {The Rosenbrock function is a ubiquitous benchmark problem in numerical optimization, and variants have been proposed to test the performance of Markov chain Monte Carlo algorithms on distributions with a curved and narrow shape. In this work we discuss the Rosenbrock distribution and the advantages and limitations of its current n-dimensional extensions. We then propose a new extension to arbitrary dimensions called the Hybrid Rosenbrock distribution, which addresses all the limitations that affect the current extensions. The Hybrid Rosenbrock distribution is composed of conditional normal kernels arranged in such a way that preserves the key features of the original Rosenbrock kernel. Moreover, due to its structure, the Hybrid Rosenbrock distribution is analytically tractable, and possesses several desirable properties which make it an excellent test model for computational algorithms. We conclude with numerical experiments that show how commonly used Markov chain Monte Carlo algorithms may fail to explore densities with curved correlation structure, restating the importance of a reliable benchmark problem for this class of densities.},
	language = {en},
	number = {2},
	urldate = {2025-03-31},
	journal = {Scandinavian Journal of Statistics},
	author = {Pagani, Filippo and Wiegand, Martin and Nadarajah, Saralees},
	year = {2022},
	note = {\_eprint: https://onlinelibrary.wiley.com/doi/pdf/10.1111/sjos.12532},
	keywords = {algorithm testing, benchmarking, Markov chain Monte Carlo, MCMC, Rosenbrock},
	pages = {657--680},
	file = {Full Text PDF:C\:\\Users\\N11642513\\Zotero\\storage\\56ZTPVNN\\Pagani et al. - 2022 - An n-dimensional Rosenbrock distribution for Marko.pdf:application/pdf;Snapshot:C\:\\Users\\N11642513\\Zotero\\storage\\BRFC39X3\\sjos.html:text/html},
}

@article{rosenbrock,
	title = {An {Automatic} {Method} for {Finding} the {Greatest} or {Least} {Value} of a {Function}},
	volume = {3},
	issn = {0010-4620},
	url = {https://doi.org/10.1093/comjnl/3.3.175},
	doi = {10.1093/comjnl/3.3.175},
	abstract = {The greatest or least value of a function of several variables is to be found when the variables are restricted to a given region. A method is developed for dealing with this problem and is compared with possible alternatives. The method can be used on a digital computer, and is incorporated in a program for Mercury.},
	number = {3},
	urldate = {2025-03-31},
	journal = {The Computer Journal},
	author = {Rosenbrock, H. H.},
	month = jan,
	year = {1960},
	pages = {175--184},
	file = {Snapshot:C\:\\Users\\N11642513\\Zotero\\storage\\22BYGE5V\\345501.html:text/html},
}

@article{mcmc_rosenbrock,
	title = {Ensemble samplers with affine invariance},
	volume = {5},
	issn = {2157-5452},
	url = {https://msp.org/camcos/2010/5-1/p04.xhtml},
	doi = {10.2140/camcos.2010.5.65},
	number = {1},
	urldate = {2025-03-31},
	journal = {Communications in Applied Mathematics and Computational Science},
	author = {Goodman, Jonathan and Weare, Jonathan},
	month = jan,
	year = {2010},
	note = {Publisher: Mathematical Sciences Publishers},
	pages = {65--80},
	file = {Full Text PDF:C\:\\Users\\N11642513\\Zotero\\storage\\37D6YTKK\\Goodman and Weare - 2010 - Ensemble samplers with affine invariance.pdf:application/pdf},
}

\newpage
\appendix
\onecolumn
\section{Derivation of Closed Form Solution}

\label{closedform}

Consider the optimisation problem
\begin{align*}
    \text{PSD} &= \sup_{g \in \mathcal{G}} | \mathbb{E}_Q[\mathcal{A}g(x)] |  \\ 
        &= \sup_{\beta \in \mathbb{R}^J: \|\beta\|_2 \leq 1} \big| \mathbb{E}_Q\big[\sum_{k=1}^J \beta_k \mathcal{A}P_k(x) \big] \big|\\
        &= \sup_{\beta \in \mathbb{R}^J: \|\beta\|_2 \leq 1} \big| \sum_{k=1}^J \beta_k \mathbb{E}_Q \big[\mathcal{A} P_k(x)\big] \big|\\
        &= \sup_{\beta \in \mathbb{R}^J: \|\beta\|_2 \leq 1} \big| \sum_{k=1}^J \beta_k \bar{z}_k \big|\\
        &= \sup_{\beta \in \mathbb{R}^J: \|\beta\|_2 \leq 1} \sum_{k=1}^J \beta_k \bar{z}_k,
\end{align*}
where $\bar{z}_k=\mathbb{E}_Q [\mathcal{A} P_k(X)]$. This can be written as an optimisation problem with a Lagrange multiplier to enforce the constraint $\|\beta\|_2 \leq 1$ (or equivalently $\|\beta\|_2^2 \leq 1$). Specifically, we have $\text{PSD} = \sup_{\beta \in \mathbb{R}^J} L(\beta,\lambda)$, where
\begin{align*}
    L(\beta,\lambda) &= \sum_{k=1}^J \beta_k \bar{z}_k - \lambda(\|\beta\|_2^2  - 1). 
\end{align*}
Computing the gradient of $L(\beta,\lambda)$ with respect to $\beta$ and setting this to zero for the optimisation, we have
\begin{align*}
    0 &= \nabla_{\beta} L(\beta,\lambda)  \\
 	0 &=  \bar{z} - 2\lambda\beta\\
	\beta &= \frac{\bar{z}}{2\lambda}.
\end{align*}
Acknowledging that the supremum occurs when  $\|\beta\|_2 = 1$, we have $\|\beta\|_2 = \|\frac{\bar{z}}{2\lambda}\|_2 = \frac{\|\bar{z}\|_2}{2\lambda} = 1$, so $\lambda = \frac{\|\bar{z}\|_2}{2}$ and thus $\beta = \frac{\bar{z}}{\|\bar{z}\|_2}$. Finally, substituting this solution for $\beta$ back into the original objective, the solution for the supremum is
\begin{align*}
    \text{PSD} &= \sup_{\beta \in \mathbb{R}^J: \|\beta\|_2 \leq 1} \sum_{k=1}^J \beta_k \bar{z}_k \\
		 &= \sum_{k=1}^J \frac{\bar{z}_k}{\|\bar{z}\|_2} \bar{z}_k \\
		 &= \|\bar{z}\|_2 \\
		 &=\sqrt{\sum_{k=1}^J \bar{z}_k^2}.
\end{align*}

\section{Proof of Proposition \ref{prop:PSD}}

\label{proof}


\begin{proof}
Using the multi-index notation $x^\alpha = \prod_{i=1}^d x[i]^{\alpha_i}$, we have $\mathcal{G}= \text{span}\{x^\alpha: \alpha \in \mathbb{N}_{0}^d, \sum_{i=1}^d \alpha_i \leq r\}$ for PSD with polynomial order $r$ and
\begin{align*}
    \text{PSD} 
    &= \sqrt{\sum_{\alpha \in \mathbb{N}_0^d: \sum_{i=1}^d \alpha_i\leq r} \mathbb{E}_Q[\mathcal{A}x^{\alpha}]^2}.
\end{align*}
Thus, $\text{PSD}=0$ if and only if $\mathbb{E}_Q[\mathcal{A}x^\alpha]=0$ for all $\alpha \in \mathbb{N}_0^d: \sum_{i=1}^d \alpha_i\leq r$. We will now proceed by proving that $\mathbb{E}_Q[\mathcal{A}x^{\alpha}]=0$ for all $\alpha \in \mathbb{N}_0^d: \sum_{i=1}^d \alpha_i\leq r$ (i.e.\ $\text{PSD}=0$) \textit{if and only} if the moments of $P$ and $Q$ match up to order $r$.

The condition that $\mathbb{E}_Q[\mathcal{A}x^{\alpha}]=0$ for all $\alpha \in \mathbb{N}_0^d: \sum_{i=1}^d \alpha_i\leq r$ can be written as a system of ${{d+r}\choose{d}} -1$ linear equations. For each $\alpha$, we wish to understand the conditions under which 
\begin{align} \label{eqn:Stein_Q}
    \mathbb{E}_Q[\mathcal{A}x^{\alpha}]    &= \mathbb{E}_Q[\Delta x^{\alpha} + \nabla \log p(x) \cdot \nabla x^\alpha] \nonumber \\
    &= \mathbb{E}_Q[\Delta x^{\alpha} + (-\Sigma^{-1}(x-\mu)) \cdot \nabla x^\alpha] \nonumber \\
    &= \mathbb{E}_Q[\Delta x^{\alpha}] - \mathbb{E}_Q[\Sigma^{-1}x \cdot \nabla x^\alpha] + \mathbb{E}_Q[\Sigma^{-1}\mu \cdot \nabla x^\alpha]\\
    &= 0  \nonumber.
    %
    %
\end{align}
This uses the property that $P$ is a Gaussian distribution as per the Bernstein-von Mises limit, so $\nabla \log p(x) = -\Sigma^{-1}(x-\mu)$.  

Since $P$ decays faster than polynomially in the tails, we can also use the property that $\mathbb{E}_P[\mathcal{A}x^{\alpha}] = 0$. This leads to a system of ${{d+k}\choose{d}} -1$ linear equations, where for each $\alpha$
\begin{align} \label{eqn:Stein_P}
    \mathbb{E}_P[\mathcal{A}x^{\alpha}]    &= \mathbb{E}_P[\Delta x^{\alpha} + \nabla x^\alpha \cdot (-\Sigma^{-1}(x-\mu))] \nonumber \\
    &= \mathbb{E}_P[\Delta x^{\alpha}] - \mathbb{E}_P[\nabla x^\alpha \cdot \Sigma^{-1}x] + \mathbb{E}_P[\nabla x^\alpha \cdot \mu]\\
    &= 0  \nonumber .
\end{align}

Subtracting \eqref{eqn:Stein_P} from \eqref{eqn:Stein_Q}, 
\begin{align} \label{eqn:Stein}
    \mathbb{E}_{Q-P}[\Delta x^{\alpha}] - \mathbb{E}_{Q-P}[\nabla x^\alpha \cdot \Sigma^{-1}x] + \mathbb{E}_{Q-P}[\nabla x^\alpha \cdot \mu] &= 0,
\end{align}
where $\mathbb{E}_{Q-P}[f(x)]$ is used as a shorthand for $\mathbb{E}_{Q}[f(x)] - \mathbb{E}_{P}[f(x)]$. Equation \eqref{eqn:Stein} combines the condition that $\mathbb{E}_Q[\mathcal{A}x^\alpha]=0$ with the property that $\mathbb{E}_P[\mathcal{A}x^\alpha]=0$ into a single system of equations. Our task is to prove that this system of equations holds if and only if the moments of $P$ and $Q$ match up to order $r$. We will do so using proof by induction on the polynomial order $r$.

\subsection{Base case ($r=1$)}
For $r=1$, $x^\alpha$ simplifies to $x_i$ for $i \in \{1,\ldots,d\}$. It is simpler in this case to work directly with \eqref{eqn:Stein_Q} than to work with \eqref{eqn:Stein}. For each $i \in \{1,\ldots,d\}$, we have
\begin{align*}
    \mathbb{E}_Q[\mathcal{A}x_i] &= 0\\
    \mathbb{E}_{Q}[\Delta x_i] - \mathbb{E}_{Q}[\Sigma^{-1}x \cdot \nabla x_i] + \mathbb{E}_{Q}[\Sigma^{-1}\mu \cdot \nabla x_i] &= 0\\
    -\mathbb{E}_{Q}[\Sigma_{i\cdot}^{-1}x] + \mathbb{E}_{Q}[\Sigma_{i\cdot}^{-1} \mu] &= 0\\
    \mathbb{E}_{Q}[\Sigma_{i\cdot}^{-1}x] &= \Sigma_{i\cdot}^{-1} \mu,
\end{align*}
where $\Sigma_{i\cdot}^{-1}$ denotes the $i$th row of $\Sigma^{-1}$. Vectorising the above system of equations we have
\begin{align*}
    \Sigma^{-1}\mathbb{E}_{Q}[x] &= \Sigma^{-1} \mu\\
    \mathbb{E}_{Q}[x] &= \mu.
\end{align*}
This proves that PSD with $r=1$ is zero if and only if the first order moments (means) of $P$ and $Q$ match.

\subsection{Base case ($r=2$)}
For $r=2$, $x^\alpha$ simplifies to $x_ix_j$ for $i,j \in \{1,\ldots,d\}$. Following \eqref{eqn:Stein}, for each $i,j$ we have
\begin{align*} 
    \mathbb{E}_{Q-P}[\Delta x_ix_j] - \mathbb{E}_{Q-P}[\Sigma^{-1}x \cdot \nabla x_ix_j] + \mathbb{E}_{Q-P}[\Sigma^{-1}\mu \cdot \nabla x_ix_j] &= 0.
\end{align*}
The first term is $\mathbb{E}_{Q-P}[\Delta x_ix_j]=0$ because $\Delta x_ix_j=2\mathbb{I}_{i=j}$ does not depend on $x$ so the expectations under $P$ and $Q$ match. Similarly, the final term $\mathbb{E}_{Q-P}[\mu \cdot \nabla x_ix_j]$ disappears because first-order moments match. Thus
\begin{align} \label{eqn:Stein2}
    \mathbb{E}_{Q-P}[\Sigma^{-1}x \cdot \nabla x_ix_j] &= 0 \nonumber \\
    \mathbb{E}_{Q-P}[\Sigma_{i\cdot}^{-1}x x_j] + \mathbb{E}_{Q-P}[\Sigma_{j\cdot}^{-1}x x_i] &= 0 \nonumber \\
    \left(\Sigma^{-1} \mathbb{E}_{Q-P}[xx^\top]\right)_{ij} + \left(\Sigma^{-1} \mathbb{E}_{Q-P}[xx^\top]\right)_{ji} &= 0 \nonumber \\
    \left(\Sigma^{-1} \mathbb{E}_{Q-P}[xx^\top]\right)_{ij} + \left(\mathbb{E}_{Q-P}[xx^\top] \Sigma^{-1} \right)_{ij} &= 0 \nonumber \\
    \left(\Sigma^{-1} A\right)_{ij} + \left(A \Sigma^{-1} \right)_{ij} &= 0,
\end{align}
where $A=\mathbb{E}_{Q-P}[xx^\top]$. Note that $xx^\top$ gives all possible second order moments so if $A=0$ then all second order moments match. We need to show that $A=0$ is the only possible solution to this equation.

Let $\Sigma^{-1} = QDQ^\top$, where $D$ is the diagonal matrix of eigenvalues with $D_{kk} = \lambda_k$ for $k \in \{1,\ldots,d\}$. This is an eigen-decomposition so $Q$ and $Q^\top$ are orthogonal ($Q^\top = Q^{-1}$). Now, \eqref{eqn:Stein2} holds for all $i,j$ so the vectorised system of linear equations is
\begin{align*}
\Sigma^{-1} A + A \Sigma^{-1} &= 0\\
QDQ^\top A + A QDQ^\top &= 0\\
DQ^\top A Q + Q^\top A QD &= 0\\
D\beta + \beta D &= 0,
\end{align*}
where $\beta = Q^\top A Q$. The third line comes from multiplying by $Q^\top$ on the left-hand side and $Q$ on the right-hand side. Consider now the $ij$th element:
\begin{align*}
    (D\beta)_{ij} + (\beta D)_{ij} &= 0\\
    (\lambda_i + \lambda_j) \beta_{ij} &= 0.
\end{align*}
We know that $\lambda_k>0$ for all $k \in \{1,\ldots,d\}$ because $\Sigma^{-1}$ is symmetric positive definite, so it must be that $\beta=0$ and therefore $A=0$ is the only solution. 

This proves that PSD with $r=2$ if and only if the moments of $P$ and $Q$ match up to order two, i.e.\ the means and (co)variances match.

\subsection{Inductive Step}

Suppose that the moments up to and including order $r-1$ match. Noting that $\mathbb{E}_{Q-P}[\Delta x^{\alpha}]$ is of order $r-2$ and $\mathbb{E}_{Q-P}[\nabla x^\alpha \cdot \mu]$ is of order $r-1$ and using \eqref{eqn:Stein}, we have
\begin{align} \label{eqn:Stein_r1}
    \mathbb{E}_{Q-P}[\nabla x^\alpha \cdot \Sigma^{-1}x] &= 0.
\end{align}
We must show this signifies that the moments of order $r$ must also match.

For a general $r$th order monomial, $x^\alpha$ can alternatively be written as $\prod_{m=1}^r x_{i_m}$ for $i_m \in \{1,\ldots,d\}$. Therefore we have
\begin{align*} 
    \mathbb{E}_{Q-P}[\Sigma^{-1}x \cdot \nabla \prod_{m=1}^r x_{i_m}] = 
    \sum_{n=1}^r \mathbb{E}_{Q-P}[\Sigma_{i_n \cdot }^{-1}x \prod_{m=1,m\neq n}^r x_{i_m}] = 0.
\end{align*}
This can be written in matrix form for all possible multi-indices of order $r$ using Kronecker products:
\begin{align*}
0&=\mathbb{E}_{Q - P}[\overbrace{\Sigma^{-1}x \otimes x^\top \otimes x^\top \otimes \cdots \otimes x^\top}^{r \text{ terms}}] +\mathbb{E}_{Q - P}[x\otimes (x^\top\Sigma^{-1}) \otimes x^\top \otimes \cdots \otimes x^\top] +\\
&\phantom{=+}\mathbb{E}_{Q - P}[x\otimes x^\top \otimes (x^\top \Sigma^{-1}) \otimes \ldots \otimes x^\top] +\cdots +\mathbb{E}_{Q - P}[x\otimes x^\top \otimes x^\top \otimes \ldots \otimes (x^\top \Sigma^{-1})].
\end{align*}

Writing $A = \mathbb{E}_{Q-P}[\overbrace{x\otimes x^\top \otimes \cdots \otimes x^\top}^{r \text{ terms}}] \in \mathbb{R}^{d \times d^{r-1}}$ and using $(A \otimes B)(C\otimes D) = AB \otimes CD$ provided that one can form the products $AB$ and $CD$, this becomes
\begin{align*}
    \Sigma^{-1}A + A(\overbrace{\Sigma^{-1}\otimes I \otimes \cdots \otimes I}^{r-1 \text{ terms}})  + A(I \otimes \Sigma^{-1}  \otimes \cdots \otimes I) + \cdots + A(I \otimes I \otimes \cdots \otimes \Sigma^{-1}) &= 0.
\end{align*}
Once again using $\Sigma^{-1} = QDQ^\top$, we have
\begin{align*}
    QDQ^\top A + A(QDQ^\top\otimes I \otimes \cdots \otimes I)  + A(I \otimes QDQ^\top  \otimes \cdots \otimes I) + \cdots + A(I \otimes I \otimes \cdots \otimes QDQ^\top) &= 0.
\end{align*}
Multiplying by $Q^\top$ on the left and $(\overbrace{Q \otimes Q \otimes \cdots \otimes Q}^{r-1 \text{ terms}})$ on the right
\begin{align*}
    0 &= DQ^\top A(Q \otimes Q \cdots \otimes Q) + Q^\top A(QDQ^\top\otimes I \otimes \cdots \otimes I)(Q \otimes Q \cdots \otimes Q) \\
    &\phantom{=} + Q^\top A(I \otimes QDQ^\top  \otimes \cdots \otimes I)(Q \otimes Q \cdots \otimes Q) + \cdots + Q^\top A(I \otimes I \otimes \cdots \otimes QDQ^\top)(Q \otimes Q \cdots \otimes Q)\\
    0 &= DQ^\top A(Q \otimes Q \cdots \otimes Q) + Q^\top A(QD\otimes Q \otimes \cdots \otimes Q) \\
    &\phantom{=} + Q^\top A(Q \otimes QD  \otimes \cdots \otimes Q) + \cdots + Q^\top A(Q \otimes Q \otimes \cdots \otimes QD)\\
    0 &= DQ^\top A(Q \otimes Q \cdots \otimes Q) + Q^\top A(Q \otimes Q \cdots \otimes Q)(D\otimes I \otimes \cdots \otimes I) \\
    &\phantom{=} + Q^\top A(Q \otimes Q \cdots \otimes Q)(I \otimes D  \otimes \cdots \otimes I) + \cdots + Q^\top A(Q \otimes Q \cdots \otimes Q)(I \otimes I \otimes \cdots \otimes D)\\
    0 &= D\beta + \beta(D\otimes I \otimes \cdots \otimes I) + \beta(I \otimes D  \otimes \cdots \otimes I) + \cdots + \beta(I \otimes I \otimes \cdots \otimes D),
\end{align*}
where $\beta = Q^\top A(Q \otimes Q \otimes \cdots \otimes Q)$. Now consider the $ij$th entries in these equations, where $i \in \{1,\ldots,d\}$ and $j\in \{1,\ldots,d^{r-1}\}$. The Kronecker product of diagonal matrices is diagonal, so the terms $(D\otimes I \otimes \cdots \otimes I)$, $(I \otimes D  \otimes \cdots \otimes I)$ up to $(I \otimes I \otimes \cdots \otimes D)$ all consist of diagonal matrices with elements of $\lambda$ on the diagonal. The explicit value of $\lambda$ for any given $i$ and $j$ is not important, so for now consider indices $z_s \in \{1,\ldots,d\}$ for $s=1,\ldots,r$ which may not be unique\footnote{To be explicit in an example of a fourth order polynomial, consider instead $i$ and $j$ indices starting at zero so $i \in \{0,\ldots,d-1\}$ and $j\in \{0,\ldots,d^3-1\}$. Then the explicit form of the equation is $(\lambda_i + \lambda_{\lfloor j/d^2 \rfloor} + \lambda_{\lfloor j/d \rfloor \% d}  + \lambda_{j \% d} )\beta_{ij} = 0$ where $a\%d=a - d\lfloor a/d \rfloor$ is the remainder operator.}. Then we have
\begin{align*}
    (\lambda_{z_1} + \lambda_{z_2} + \cdots \lambda_{z_k})\beta_{ij} &= 0.
\end{align*}
Since all $\lambda > 0$ by positive-definiteness of $\Sigma^{-1}$, we have that $\beta_{ij}=0$ for all $i$ and $j$ and therefore
\begin{align*}
    \beta &= 0\\
    Q^\top A(Q \otimes Q \otimes \cdots \otimes Q) &= 0\\
    %
    %
    %
    %
    %
    A &= 0,
\end{align*}
where the third line comes from multiplying by $Q$ on the left and $(Q^\top \otimes Q^\top \otimes \cdots \otimes Q^\top)$ on the right.
We have now shown that $\mathbb{E}_Q[\mathcal{A}x^\alpha]=0$ for all $\alpha \in \mathbb{N}_0^d: \sum_{i=1}^d \alpha_i\leq r$ \textit{if and only} if the moments of $P$ and $Q$ match up to order $r$. Thus $\text{PSD}=0$ \textit{if and only} if the moments of $P$ and $Q$ match up to order $r$.

\end{proof}

\section{Additional Empirical Results}
\label{additional empirical}

This section provides additional empirical results, as explained in Section \ref{results}.

\subsection{Asymptotic Goodness-of-fit Test}

\label{Asymptote results}

\begin{figure*}[h]
    \centering
    \begin{subfigure}[t]{0.2\textwidth}
        \includegraphics[height=0.8\textwidth]{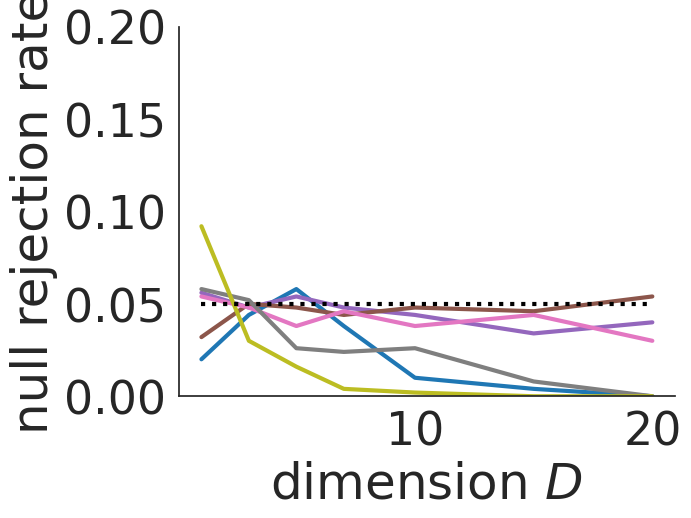}
        \caption{Unit Gaussian (i.e.\ $Q=P$), 500 repeats}
        \label{asymp null}
    \end{subfigure}
    \begin{subfigure}[t]{0.2\textwidth}
        \includegraphics[height=0.8\textwidth]{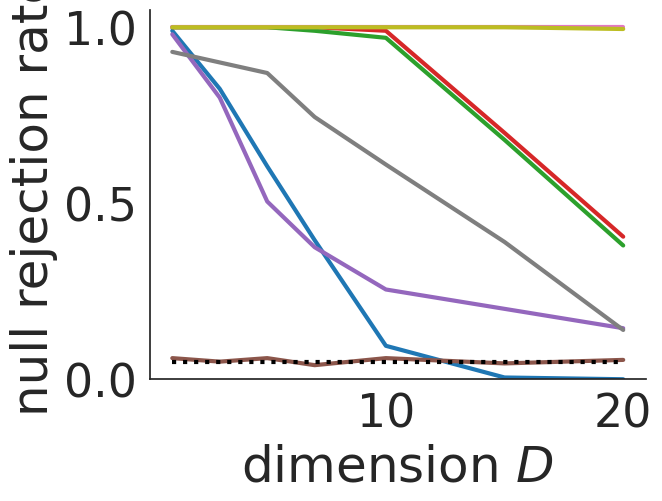}
        \caption{Variance-perturbed Gaussian, 200 repeats}
        \label{asymp variance}
    \end{subfigure}
    \begin{subfigure}[t]{0.2\textwidth}
        \includegraphics[height=0.8\textwidth]{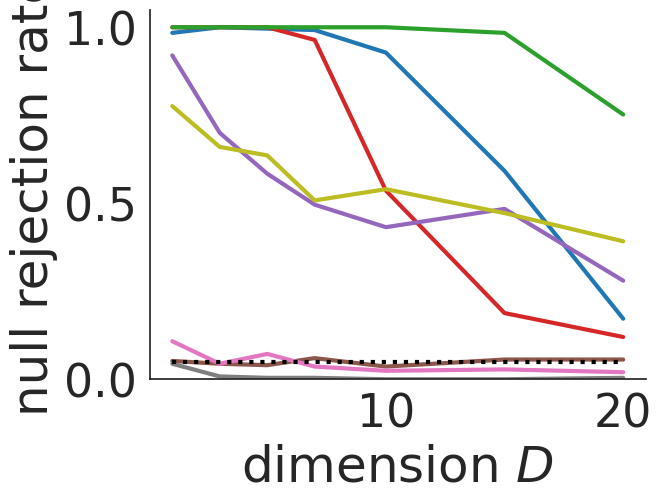}
        \caption{Student-t, 250 repeats}
        \label{asymp student_t}
    \end{subfigure}
    \begin{subfigure}[t]{0.2\textwidth}
        \includegraphics[height=0.8\textwidth]{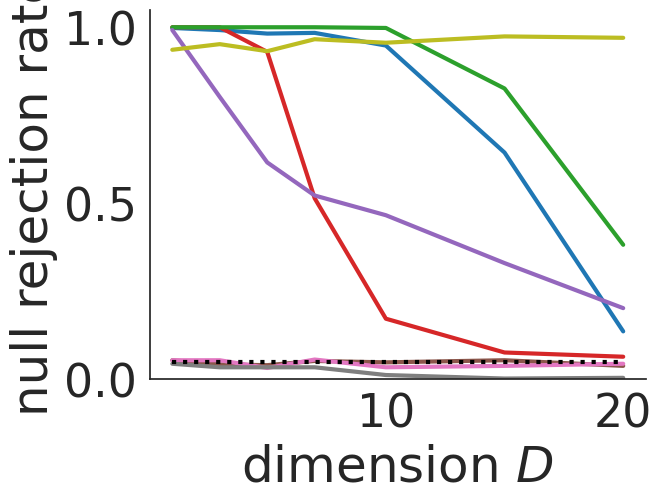}
        \caption{Laplace, 500 repeats}
        \label{asymp lap}
    \end{subfigure}
    \hspace{0.1cm}
    \begin{subfigure}[t]{0.1\textwidth}
        \includegraphics[height=0.12\textheight]{goftest-legend.png}
        
    \end{subfigure}

\caption{Type I error rate (a) and statistical power (b,c,d) for detecting discrepancies between the unit Gaussian $P$ and the sampling distribution $Q$, using the test based on the asymptotic distribution of the U-statistic.}
\label{fig:goftest_asymptote}
\end{figure*}

We reproduce Figure \ref{fig:goftest} in Section \ref{ssec:res_gof} of the main paper by using Corollary \ref{asymptote} to implement a test based on the asymptotic distribution of the U-statistic, $\widehat{\text{PSD}}^2_u$. Specifically, we set \( P = N(0_d, I_d) \) and assess the performance for a variety of $Q$ and $d$ using statistical tests with significance level $\alpha = 0.05$. We draw $5000$ samples from the null distribution given in Corollary \ref{asymptote} using the plugin estimator $\hat{\Sigma}_q$.  We investigate the four cases described in Section \ref{ssec:res_gof} with identical settings as in the main paper.

The results are shown in Figure \ref{fig:goftest_asymptote}. Primarily, while PSD displays good performance for the variance-perturbed ($r=2$ and $r=4$) and Laplace ($r=4$) experiments, PSD with $r=4$ does not maintain high power in the case of the student-t. Also, we note that the type I error is not well controlled. Further, the asymptotic test can be slower than the bootstrap test due to the computation of the eigenvalues of the covariance matrix (complexity $\mathcal{O}(J^3)$). This can be improved by considering approximations to the covariance matrix, however this would introduce bias asymptotically.

We recommend the bootstrap procedure for goodness-of-fit testing over the asymptotic test due to its higher statistical power, better control over type I error and lower computational complexity. 

\subsection{Detecting Second Moment Discrepancies}

\label{Kanagawa Example4p1}

Section 4.1 of \citet{kanagawa_controlling_2024} illustrates the failure of the standard KSD with IMQ kernel for detecting non-convergence of the second moment of a simple sequence, $(Q_n)_{n \geq 1}$ to its target $P$, set to be the standard Gaussian of $d$ dimensions, $P=\mathcal{N}(\textbf{0},\textbf{I}_d)$. The sequence $Q_n$ is defined as follows:
\begin{align*}
    Q_n = \left( 1 - \frac{1}{n+1} \right) P_n + \frac{1}{n+1} \delta_{x_n},
\end{align*}
where $P_n = \frac{1}{n} \sum_{j=1}^{n} \delta_{X_j} $, and $x_n = \sqrt{n + 1} \cdot 1$ with $ 1 = (1, \dots, 1) $ (a vector of ones), and $ \{ X_1, \dots, X_n \} \stackrel{i.i.d}{\sim} P$ are i.i.d. with $ X_i \sim P $.

\citet{kanagawa_controlling_2024} demonstrate that the sequence \( Q_n \) converges to \( P \) almost surely. However, it has the following (almost sure) biased limit:
\begin{align*}
    \lim_{n \to \infty} \mathbb{E}_{Y \sim Q_n} [Y \otimes Y] = \mathbb{E}_{X \sim P} [X \otimes X] + 1 \otimes 1.
\end{align*}

Figure \ref{fig:kanagawa} shows trace plots of KSD, PSD and the Euclidean distance between the true and estimated moments up to order $2$. While KSD appears to decay exponentially to zero, the trace plot for our recommended PSD with $r=2$
is remarkably similar to the trace plot for the Euclidean discrepancy in moments up to order $2$. This is in contrast to the method from \citet{kanagawa_controlling_2024}, which quickly asymptotes to a non-zero value. While their approach is likely to have higher statistical power, we believe PSD has practical advantages since it is aimed at directly tracking discrepancies in moments. Moreover, we achieve moment tracking in linear time rather than the quadratic time of \citet{kanagawa_controlling_2024}.

\begin{figure}[ht]
    \centering
    \includegraphics[width=0.8\linewidth]{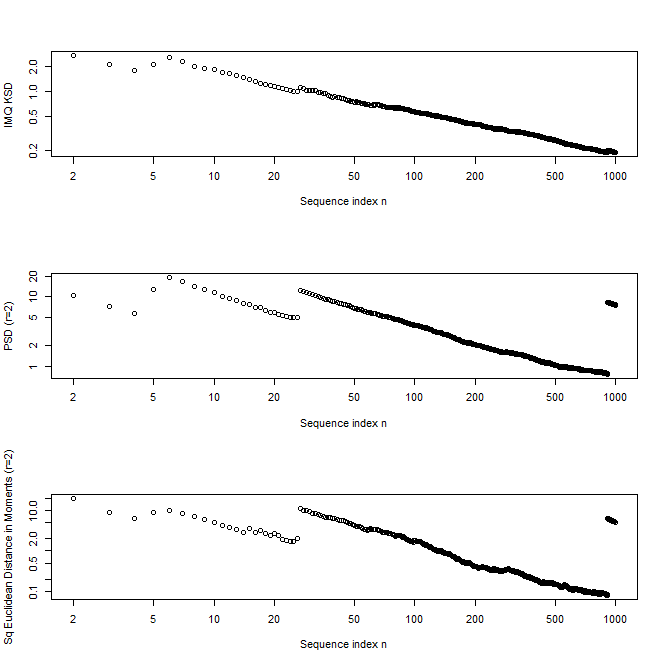}
    \caption{Moment tracking capabilities of PSD for Example 4.1 of \citet{kanagawa_controlling_2024}}
    \label{fig:kanagawa}
\end{figure}

\FloatBarrier

\subsection{Logistic Regression: Sparsity Prior} 

\label{sparse}

We extend the empirical investigations to consider a more challenging logistic regression task with a sparsity-inducing prior. We use the stochastic search variable selection (SSVS) prior of \citet{SSVS}, corresponding to a mixture of Gaussians with standard deviations of 0.1 and 10. The resulting posterior is multimodal with modes capturing the zero and non-zero estimates from the sparse prior.  We simulate data with 100 observations and $d=20$ variables, of which three are non-zero. We use multiple ULA runs with different step sizes and different initialisations to thoroughly investigate the performance of PSD.

Our results (averaged over 5 runs) are presented in Table \ref{tab: Logistic Sparse} and the posterior distributions corresponding to the different step sizes are given in figure \ref{fig: Logistic Sparse}. The normalised median discrepancy values are reported. KSD and PSD with $r=1$ have similar performance and fail to identify the poor performance of ULA for large step sizes. PSD with $r \geq 2$ identifies the step size with the best moment estimation. In particular, despite the posterior being far from normal, our methods still attain superior performance for assessing moment discrepancies compared to IMQ KSD. 

\begin{table}[ht]
\centering
\small
\begin{tabular}{rllllllllll}
  \hline
  Step Size & KSD & PSD1 & Eucl1 & PSD2 & Eucl2 & PSD3 & Eucl3 & PSD4 & Eucl4 \\ 
  \hline
  $1 \times 10^{-5}$ & 1.0e+00 & 1.0e+00 & 6.2e-02 & 1.0e+00 & 2.2e-02 & 1.0e+00 & 7.4e-03 & 2.3e-01 & 2.4e-03 \\ 
  $5 \times 10^{-5}$ & 5.7e-01 & 5.6e-01 & 5.8e-02 & 5.6e-01 & 2.1e-02 & 5.7e-01 & 7.2e-03 & 1.3e-01 & 2.3e-03 \\ 
  $1 \times 10^{-4}$ & 4.3e-01 & 4.3e-01 & 5.4e-02 & 4.5e-01 & 2.0e-02 & 4.8e-01 & 6.7e-03 & 1.2e-01 & 2.2e-03 \\ 
  $5 \times 10^{-4}$ & 2.0e-01 & 2.0e-01 & 3.9e-02 & 2.3e-01 & 1.5e-02 & 2.6e-01 & 5.0e-03 & 6.6e-02 & 1.6e-03 \\ 
  $1 \times 10^{-3}$  & 1.4e-01 & 1.4e-01 & 3.1e-02 & 1.6e-01 & 1.2e-02 & 1.9e-01 & 3.8e-03 & 5.1e-02 & 1.2e-03 \\ 
  $5 \times 10^{-3}$  & 7.2e-02 & 6.7e-02 & \textbf{1.56e-02} & \textbf{8.38e-02} & \textbf{4.88e-03} & \textbf{1.07e-01} & \textbf{1.79e-03} & \textbf{3.25e-02} & \textbf{5.75e-04} \\ 
  $1 \times 10^{-2}$ & 6.6e-02 & 4.7e-02 & 7.0e-02 & 1.0e-01 & 2.9e-02 & 1.5e-01 & 1.1e-02 & 5.4e-02 & 3.8e-03 \\ 
  $5 \times 10^{-2}$  & 2.9e-02 & 2.4e-02 & 7.6e-01 & 1.5e-01 & 6.5e-01 & 5.3e-01 & 5.4e-01 & 4.5e-01 & 4.5e-01 \\ 
  $1 \times 10^{-1}$  & \textbf{2.78e-02} & \textbf{1.70e-02} & 1.0e+00 & 2.0e-01 & 1.0e+00 & 9.7e-01 & 1.0e+00 & 1.0e+00 & 1.0e+00 \\ 
   \hline
\end{tabular}
\caption{Normalised discrepancies for the logistic regression example with sparsity-inducing prior. Shown are the KSD, the PSD and the Euclidean distance between estimated and gold-standard moments of order at most $r$. The lowest value is shown in bold.} 
\label{tab: Logistic Sparse}
\end{table}

\begin{figure*}[h]
    \centering

    \begin{subfigure}[t]{0.32\textwidth}
        \includegraphics[width=\textwidth]{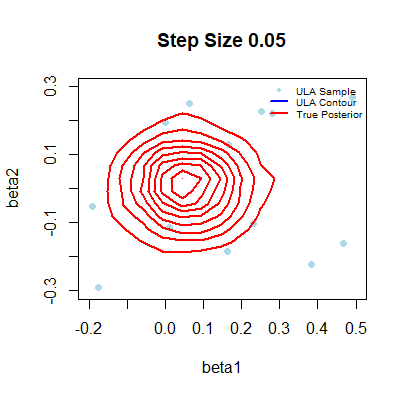}
    
    \end{subfigure}
    \hspace{0.1cm}
    \begin{subfigure}[t]{0.32\textwidth}
        \includegraphics[width=\textwidth]{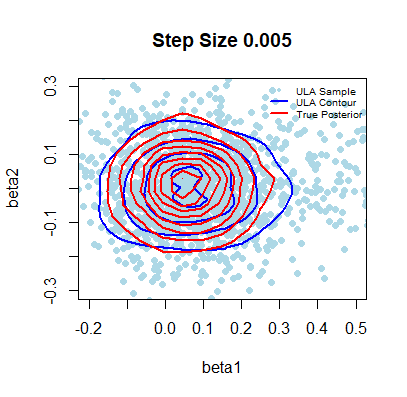}
    
    \end{subfigure}
    \hspace{0.1cm}
    \begin{subfigure}[t]{0.32\textwidth}
        \includegraphics[width=\textwidth]{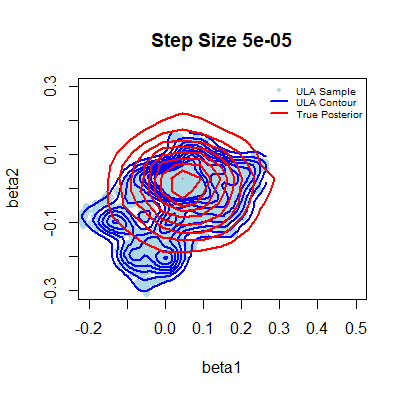}
     
    \end{subfigure}

\caption{ULA sample plots in blue approximating the true posterior (red) for the logistic regression example with sparsity-inducing prior. PSD selects a step size of $0.005$ which visually provides a good approximation the true posterior.}
\label{fig: Logistic Sparse}
\end{figure*}

\FloatBarrier

\subsection{Logistic Regression: Big Data Example}

\label{logistic Bvm}

We consider another logistic regression example looking at the performance of PSD when the posterior is close to normal. This example is a logistic regression with $10^4$ observations and $d=5$ variables using Gaussian priors with a standard deviation of $10$, thereby making the Bernstein-von Mises approximation suitable. Similarly to the previous example, we run each example five times with multiple ULA initialisations. The results are shown in Table \ref{tab:BvM Logistic} and the posterior distributions corresponding to the different step sizes given in figure \ref{fig:BvM Logistic}. We find that PSD performs well in identifying the best choice of step size in ULA, though the performance for PSD with $r=1$ is not optimal. Looking at plots of the marginal gradients versus the marginal samples as a heuristic, as per the suggestion made in Section \ref{conclusion}, we find that when the samples are largely in the tails (e.g.\ for larger step-sizes), the Gaussian approximation is poor. Higher-order PSD performs excellently and in particular, PSD with $r=2$, which is what we recommend in general, is both computationally cheap and capable of dealing with samples taken mainly in the tails.

\begin{table}[ht]
\centering
\begin{tabular}{rlllllllll}
  \hline
  Step Size  & PSD1 & Eucl1 & PSD2 & Eucl2 & PSD3 & Eucl3 & PSD4 & Eucl4 \\ 
  \hline
  $1 \times 10^{-5}$ & 7.44e-01 & 5.92e-05 & 1.97e-04 & 7.98e-07 & 1.80e-06 & 7.70e-09 & 1.11e-08 & 7.12e-11 \\ 
   $5 \times 10^{-5}$ & 2.80e-01 & 2.89e-05 & 7.07e-05 & 4.22e-07 & 6.75e-07 & 4.30e-09 & 4.26e-09 & 3.70e-11 \\ 
   $1 \times 10^{-4}$ & 2.02e-01 & 2.31e-05 & 5.14e-05 & 3.41e-07 & 4.92e-07 & 3.68e-09 & 2.76e-09 & 3.35e-11 \\ 
   $5 \times 10^{-4}$ & 8.19e-02 & \textbf{1.40e-05} & \textbf{2.30e-05} & \textbf{2.51e-07} & \textbf{2.35e-07} & \textbf{2.97e-09} & \textbf{1.61e-09} & \textbf{2.77e-11} \\ 
  $1 \times 10^{-3}$ & \textbf{6.25e-02} & 2.10e-05 & 3.20e-05 & 3.89e-07 & 3.74e-07 & 4.65e-09 & 2.71e-09 & 4.35e-11 \\ 
  $5 \times 10^{-3}$ & 5.89e-01 & 2.92e-02 & 5.89e-02 & 1.19e-03 & 2.21e-03 & 3.53e-05 & 5.52e-05 & 9.13e-07 \\ 
  $1 \times 10^{-2}$ & 7.32e-01 & 7.14e-02 & 1.19e-01 & 5.46e-03 & 7.98e-03 & 3.40e-04 & 4.71e-04 & 1.97e-05 \\ 
   $5 \times 10^{-2}$ & 9.23e-01 & 4.60e-01 & 5.03e-01 & 2.08e-01 & 2.10e-01 & 9.19e-02 & 9.63e-02 & 4.05e-02 \\ 
 $1 \times 10^{-1}$ & 1.00e+00 & 1.00e+00 & 1.00e+00 & 1.00e+00 & 1.00e+00 & 1.00e+00 & 1.00e+00 & 1.00e+00 \\ 
   \hline
\end{tabular}
\caption{Normalised discrepancies for the logistic regression example with big data. Shown are the PSD and the Euclidean distance between estimated and gold-standard moments of order at most $r$. The lowest value is shown in bold.}
\label{tab:BvM Logistic}
\end{table}

\begin{figure*}[h]
    \centering

    \begin{subfigure}[t]{0.32\textwidth}
        \includegraphics[width=\textwidth]{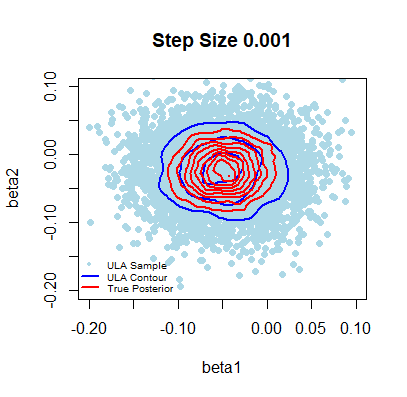}
    
    \end{subfigure}
    \hspace{0.1cm}
    \begin{subfigure}[t]{0.32\textwidth}
        \includegraphics[width=\textwidth]{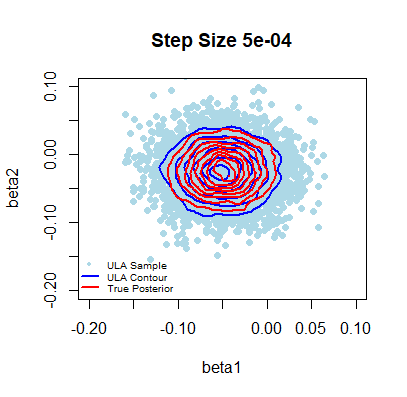}
    
    \end{subfigure}
    \hspace{0.1cm}
    \begin{subfigure}[t]{0.32\textwidth}
        \includegraphics[width=\textwidth]{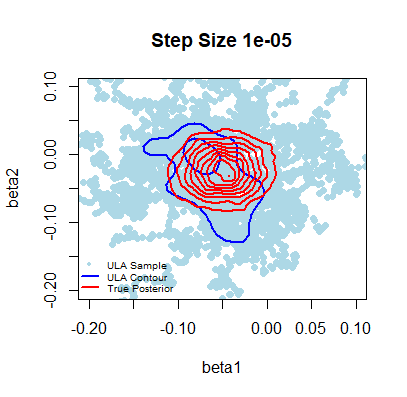}
     
    \end{subfigure}

\caption{ULA sample plots in blue approximating the true posterior (red) for the logistic regression example with big data. PSD selects a step size of  $5 \times 10^{-4}$ which visually provides a good approximation the true posterior.}
\label{fig:BvM Logistic}
\end{figure*}

\FloatBarrier

\subsection{PSD without interaction terms}

\label{PSD no-interact}

In Section \ref{poly}, we proposed an alternative PSD methods that excludes interactions to achieve a computational complexity that scales linearly with dimension, specifically $\mathcal{O}(ndr)$. In the context of a Gaussian $P$ with independent components, this means the inability to detect discrepancies in correlation. 

We now empirically investigate the performance of PSD without interactions on the SGLD example from Section \ref{ssec:res_measure} and on the two additional logistic regression examples. The results are shown in Figure \ref{fig:SGLD PSD-no interact} for the SGLD example and in Figure \ref{fig:logit PSD-no interact} for the logistic regression examples. The performance is remarkably similar to the performance with interactions. This is also true for the logistic regression examples when the covariates are simulated using an auto-regressive process with high positive or negative autocorrelation.

We do not investigate this for the Gaussian goodness-of-fit examples since we know there is no disadvantage to removing interactions in that context, and the statistical power would therefore be similar to or better than the power with interactions.

We believe PSD without interactions should generally perform similarly to PSD with interactions when applied to biased sampling methods like ULA and SGLD. Removing interactions may have more effect on bespoke sampling methods that treat multivariate relationships differently. 


\begin{figure}[h]
    \centering
    \includegraphics[width=0.5\textwidth]{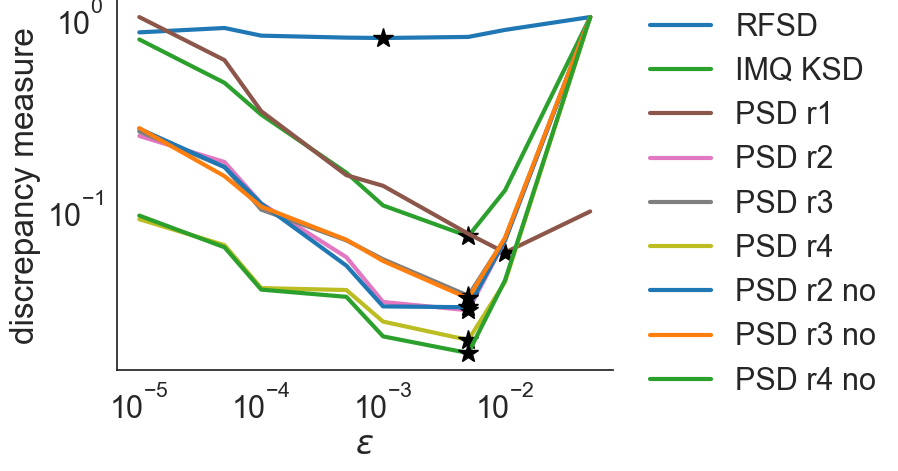}
    \caption{PSD with and without interactions for the SGLD example from Section \ref{ssec:res_measure}.}
    \label{fig:SGLD PSD-no interact}
\end{figure}

\begin{figure}[h]
    \centering
    \begin{subfigure}[b]{0.4\textwidth}
        \centering
        \includegraphics[width=\textwidth]{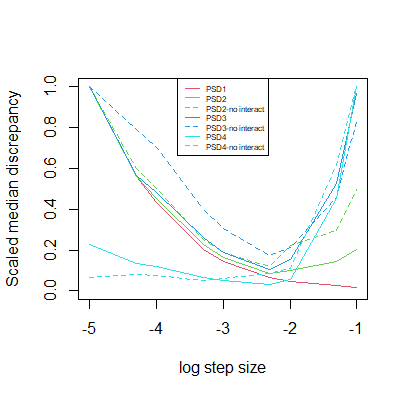}  
        
    \end{subfigure}
    \hspace{0.1\textwidth}  
    \begin{subfigure}[b]{0.4\textwidth}
        \centering
        \includegraphics[width=\textwidth]{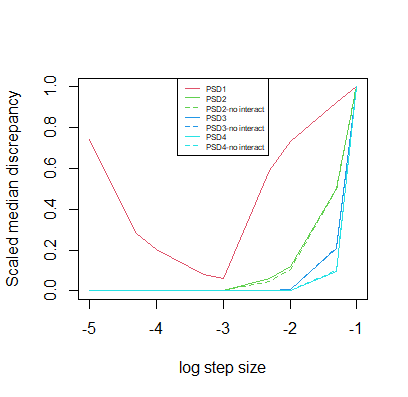}  
        
    \end{subfigure}
    \caption{PSD with and without interactions for the logistic regression examples with (a) sparsity-inducing prior and (b) big data.}
    \label{fig:logit PSD-no interact}
\end{figure}

\FloatBarrier

\subsection{Rosenbrock Target}


We conduct an empirical study with the target $P$, set to the two-dimensional Rosenbrock function \citep{rosenbrock,mcmc_rosenbrock}, given by:
\[p(x) \propto \exp \left( -\left( x_1 - \mu_1 \right)^2  - b \left( x_2 - x_1^2 \right)^2\right) , \]
where $b$ is a constant and $\mathbb{E}_P[x_1]= \mu_1$. Inference for this target is challenging because there is high probability mass in a narrow, curved region with complex dependencies and different scales across dimensions \citep{rosenbrock_benchmark}. 

We use this non-Gaussian example to show analytical reasoning and empirical investigations for a case where we are failing to track differences in the first $r$ moments. We provide the form of the expectations we are tracking instead when $r=1$.

By explicitly writing the form of the PSD, we can see that PSD with $r=1$ is zero if and only if \( 2b \left( \mathbb{E}_Q[x_1 x_2] - \mathbb{E}_Q[x_1^3] \right) = \mathbb{E}_Q[x_1] - \mu_1 \) (associated with monomial $x_1$), and $\mathbb{E}_Q[x_1^2] = \mathbb{E}_Q[x_2]$
(associated with monomial $x_2$). This can be derived by setting 
\( \mathbb{E}_Q[\nabla_{x_1} \log p(x)] = 0 \) (associated with monomial $x_1$), and \( \mathbb{E}_Q[\nabla_{x_2} \log p(x)] = 0 \) (associated with monomial $x_2$) and rearranging. Unlike in PSD for Gaussian targets, this is not simply assessing the accuracy of the first moment. The equation associated with monomial $x_1$ may hold when $\mathbb{E}_Q[x_1]$
is correct (i.e. $\mathbb{E}_Q[x_1] = \mu_1$), but it could also hold when $\mathbb{E}_Q[x_1]$ is incorrect and this is offset by having $\mathbb{E}_Q[x_1 x_2]\neq\mathbb{E}_Q[x_1^3]$. Similarly, the equation associated with monomial $x_2$ is not directly assessing whether the moments of $P$ and $Q$ match, but rather whether certain relationships between moments
are correct. This is different from the situation when we have Gaussian $P$.

The conditions under which PSD with higher $r$ is zero can also be derived. For example, to have zero PSD with $r=2$ would also require that $\mathbb{E}_Q[2+2x_1 \nabla_{x_1} \log p(x)] = 0$ (monomial $x_1^2$), $\mathbb{E}_Q[2+2x_2 \nabla_{x_2} \log p(x)] = 0$ (monomial $x_2^2$) and $\mathbb{E}_Q[x_1 \nabla_{x_2} \log p(x) + x_2 \nabla_{x_1} \log p(x)] = 0$ (monomial $x_1 x_2$). However, these become increasingly complex and difficult to interpret for higher $r$.

To empirically investigate the performance of PSD, we provide time series plots of PSD and KSD with increasing $n$ using samples either from $P$ (Figure \ref{fig:Rosenbrock Target Correct}) or from an altered Rosenbrock target with incorrect $b$ (Figure \ref{fig:Rosenbrock Target Incorrect}). For the former, all expectations converge to their true values as $n \rightarrow \infty$ and we expect all discrepancies to go towards zero. This is what we find empirically as well. For the latter, the first moment is correct but the variance is incorrect. One might then hope that PSD with $r=1$ would convergence towards zero but this not the case, as we have discovered analytically and empirically in Figure \ref{fig:Rosenbrock Target Incorrect}. Importantly, since $P$ is far from Gaussian, PSD is not tracking the discrepancies in the moments well. Nevertheless, it may be a useful discrepancy, even for this unusual $P$, for certain tasks like choosing a sampler.

\begin{figure}[h]
    \centering
    \begin{subfigure}[b]{0.4\textwidth}
        \centering
        \includegraphics[width=\textwidth]{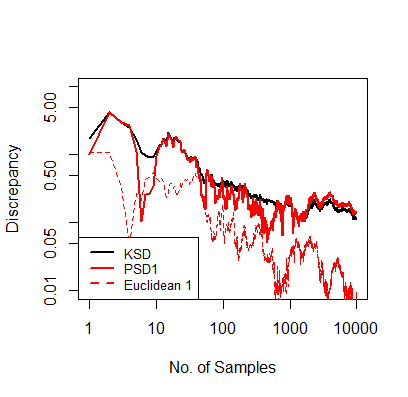}  
        \label{fig:Target1}
    \end{subfigure}
    \hspace{0.1\textwidth}  
    \begin{subfigure}[b]{0.4\textwidth}
        \centering
        \includegraphics[width=\textwidth]{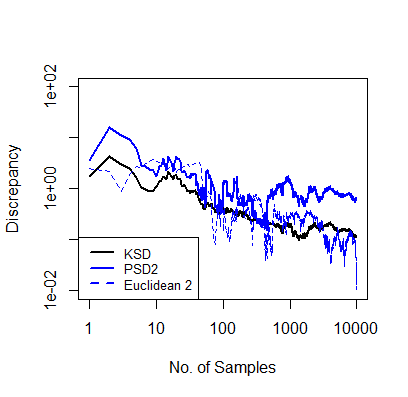}  
        \label{fig:Target2}
    \end{subfigure}
    \caption{Time series plots of KSD, PSD and Euclidean distances between estimated and true moments up to order $r$ when sampling from the correct Rosenbrock target. Results are shown for (a) $r=1$ and (b) $r=2$.}
    \label{fig:Rosenbrock Target Correct}
\end{figure}

\begin{figure}[h]
    \centering
    \begin{subfigure}[b]{0.4\textwidth}
        \centering
        \includegraphics[width=\textwidth]{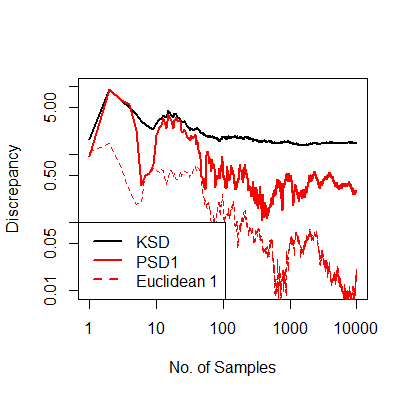}  
        \label{fig:NotTarget1}
    \end{subfigure}
    \hspace{0.1\textwidth}  
    \begin{subfigure}[b]{0.4\textwidth}
        \centering
        \includegraphics[width=\textwidth]{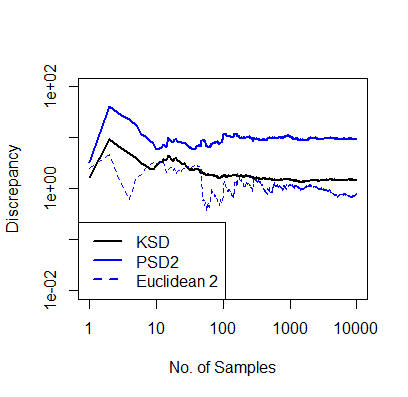}  
        \label{fig:NotTarget2}
    \end{subfigure}
    \caption{Time series plots of KSD, PSD and Euclidean distances between estimated and true moments up to order $r$ when sampling from a Rosenbrock distribution with misspecified $b$. Results are shown for (a) $r=1$ and (b) $r=2$.}
    \label{fig:Rosenbrock Target Incorrect}
\end{figure}

\label{rosenbrock}

\subsection{Translation of Mean}

\label{mean_shift}

The PSD is not translation invariant. Specifically,  one could inflate the value of PSD by considering arbitrary translations of samples by their mean. To further investigate, we have performed empirical investigations into the effect of the transformation $\tilde{x} = x - \mu_Q$, where $\mu_Q$ is the mean of $Q$. We believe this is the most sensible and practical translation to consider. However, using a mean-shift, as described, does not affect which discrepancies PSD is theoretically capable of detecting but it may affect its statistical power in doing so. It can be observed that PSD with $r=1$ is mean-shift invariant because it is based solely on the score function, which does not change with such a transformation. However, PSD with higher $r$ uses both the samples and the score function in the discrepancy so it can be sensitive to the mean-shift.

We consider two cases based on a Gaussian $P$ with unit covariance, $N=100$ and $d=5$. Further, we are interested in the case where the mean of $P$ ($\mu_P$) is not the same across all dimensions, since we believe this is where the impact on statistical power will have the most effect. For this reason, we consider two cases, (1) $\mu_P = (a,0,\ldots,0)$ and (2) $\mu_P = (0,a,\ldots,a)$. The discrepancy will be in the first dimension in both cases, so we are interested in how the relative scales of the means that are correctly specified (versus misspecified) affect the results. We consider the two cases of mean translation for situations with a misspecified mean as well as misspecified variance.

\begin{table}[ht]
\centering
\caption{We consider both case (1) and case (2) in situations where the discrepancy is in the (first) mean ($\mu_Q = \mu_P + 0.5e_1$) or in the (first) variance ($[\Sigma_Q]_{11} = [\Sigma_P]_{11} + 0.5$). A ``t'' in front of the discrepancy indicates we have performed a mean-shift reparameterisation.}
\label{tab:mean_shift}
\begin{tabular}{cc|c|ccccc}
    & & & \multicolumn{5}{c}{$a$} \\
    Case & Misspecified & Discrepancy & -1.5 & -1.0 & -0.5 & 0.0 & 0.5 \\
    \hline
    1 & Mean & PSD$_2$   & 0.70 & 0.11 & 0.06 & 0.16 & 0.87 \\
      &      & tPSD$_2$  & 0.98 & 0.22 & 0.06 & 0.13 & 0.84 \\
      &      & PSD$_3$   & 0.71 & 0.26 & 0.06 & 0.18 & 0.76 \\
      &      & tPSD$_3$  & 0.92 & 0.30 & 0.05 & 0.19 & 0.74 \\
    2 & Mean & PSD$_2$   & 1.00 & 1.00 & 0.76 & 0.20 & 0.08 \\
      &      & tPSD$_2$  & 1.00 & 1.00 & 0.84 & 0.14 & 0.06 \\
      &      & PSD$_3$   & 1.00 & 0.96 & 0.72 & 0.26 & 0.06 \\
      &      & tPSD$_3$  & 1.00 & 1.00 & 0.82 & 0.25 & 0.10 \\
    1 & Variance & PSD$_2$  & 1.00 & 0.96 & 0.78 & 0.55 & 0.32 \\
      &         & tPSD$_2$  & 1.00 & 0.98 & 0.84 & 0.54 & 0.32 \\
      &         & PSD$_3$   & 0.96 & 0.92 & 0.70 & 0.36 & 0.17 \\
      &         & tPSD$_3$  & 0.97 & 0.90 & 0.70 & 0.32 & 0.17 \\
    2 & Variance & PSD$_2$  & 0.97 & 0.83 & 0.62 & 0.39 & 0.27 \\
      &         & tPSD$_2$  & 1.00 & 0.98 & 0.88 & 0.56 & 0.34 \\
      &         & PSD$_3$   & 0.83 & 0.65 & 0.44 & 0.29 & 0.16 \\
      &         & tPSD$_3$  & 0.98 & 0.90 & 0.70 & 0.36 & 0.16 \\
\end{tabular}
\end{table}

Table \ref{tab:mean_shift} shows the estimated statistical power based on 200 independent simulations. The results demonstrate that the original mean scaling does affect the results, but the performance with the mean-shift reparamerisation is generally similar to, if not slightly better than, the performance with no reparameterisation. Importantly, the choice of parameterisation is a problem that affects Stein discrepancies more generally. KSD with radial kernels, i.e. kernels that are functions of the form $\|\| x-y \|\|$, like the Gaussian kernel, are mean-shift invariant. However, they are still sensitive to other types of reparameterization, such as whitening (considered in Appendix \ref{transform}).

\FloatBarrier

\section{Using an Invertible Linear Transform}

\label{transform}
\begin{corollary}
\label{cor:PSD_W}
Consider PSD applied on the transformed space $y=Wx$, where $W \in \mathbb{R}^{d \times d}$ is an invertible matrix that is independent of $\{x_i\}_{i=1}^N$. Denote the distribution of $y=Wx$ by $\tilde{Q}$. Using a change of variables, $\nabla_y \log \tilde{p}(y) = W^{-\top} \nabla_x \log p(x) $. The new discrepancy,
\begin{align*}
    \text{PSD}_W = \sqrt{\sum_{\alpha \in \mathbb{N}_0^d: \sum_{i=1}^d \alpha_i\leq r}\mathbb{E}_{\tilde{Q}}[\Delta_y y^{\alpha} + \nabla_y \log \tilde{p}(y) \cdot \nabla_y y^\alpha]^2},
\end{align*}
is zero if and only if the moments of $P$ and $Q$ match up to order $r$.  
\end{corollary}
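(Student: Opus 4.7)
The plan is to reduce the statement to Proposition \ref{prop:PSD} by interpreting $\text{PSD}_W$ as the original PSD applied to the pushforward distributions on the $y$-space, and then showing that the two moment-matching conditions (on $x$ versus on $y$) are equivalent when $W$ is invertible.

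First, I would verify that $\text{PSD}_W$ really is just the ordinary $\text{PSD}$ instantiated on the transformed problem. Writing $\tilde{P}$ for the pushforward of $P$ under $W$, we have $\nabla_y \log \tilde{p}(y) = W^{-\top} \nabla_x \log p(x)$ by the change-of-variables formula, so the Langevin--Stein operator in $y$-coordinates applied to $y^\alpha$ reproduces exactly the integrand appearing in the definition of $\text{PSD}_W$. Thus $\text{PSD}_W$ is the PSD between $\tilde{P}$ and $\tilde{Q}$ with polynomial order $r$. Next I would check that the hypotheses of Proposition \ref{prop:PSD} transfer: in the Bernstein--von Mises limit $P$ is Gaussian with symmetric positive-definite covariance $\Sigma$, so $\tilde{P}$ is Gaussian with covariance $W\Sigma W^\top$, which is again symmetric positive-definite because $W$ is invertible. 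Hence Proposition \ref{prop:PSD} gives $\text{PSD}_W = 0$ if and only if the multi-index moments of $\tilde{P}$ and $\tilde{Q}$ agree up to order $r$.

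The remaining step, which I expect to be the main (though still routine) obstacle, is to show that moments of $\tilde{P},\tilde{Q}$ up to order $r$ agree if and only if moments of $P,Q$ up to order $r$ agree. The key observation is that each component $y_i = \sum_j W_{ij} x_j$ is linear in $x$, so $y^\alpha$ with $|\alpha|=k$ expands as a \emph{homogeneous} polynomial of degree exactly $k$ in the $x_j$. Taking expectations, $\mathbb{E}_{\tilde Q}[y^\alpha]$ becomes a fixed linear combination (depending only on $W$ and $\alpha$) of the moments $\mathbb{E}_Q[x^\gamma]$ with $|\gamma| = k \leq r$, and likewise for $P$. Consequently, matching of all $P,Q$ moments up to order $r$ immediately forces matching of all $\tilde P,\tilde Q$ moments up to order $r$. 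For the converse, I would use invertibility: since $x = W^{-1} y$, the same argument run with $W^{-1}$ in place of $W$ expresses moments of $P,Q$ as linear combinations of moments of $\tilde P,\tilde Q$ of the same total order, so matching on the $y$-side implies matching on the $x$-side. (Equivalently, one can organise the $k$th-order moments as a tensor and note that the pushforward acts as the invertible linear map $W^{\otimes k}$ on this tensor, whose kernel is trivial.)

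Combining the two steps, $\text{PSD}_W = 0$ iff moments of $\tilde P,\tilde Q$ match up to order $r$ iff moments of $P,Q$ match up to order $r$, which is the claim. The only delicate point is making the ``order is preserved'' argument precise; the cleanest presentation is probably the tensor viewpoint above, which avoids multi-index bookkeeping and makes invertibility visibly the crux of the proof.
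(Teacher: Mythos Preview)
Your proposal is correct and follows essentially the same route as the paper: reduce to Proposition~\ref{prop:PSD} by checking that $\tilde P$ is Gaussian with symmetric positive-definite covariance $W\Sigma W^\top$, and then pass between moment-matching of $(\tilde P,\tilde Q)$ and $(P,Q)$ via the invertible linear action of $W$ on moment tensors. The paper carries out the last step explicitly with Kronecker products, which is exactly the ``tensor viewpoint'' you suggest as the cleanest presentation.
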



\begin{proof}

Under the Bernstein-von-Mises limit (or generally for Gaussian targets), $P =\mathcal{N}(\mu,\Sigma)$ so in the transformed space $\tilde{P} = \mathcal{N}(\tilde{\mu}, \tilde{\Sigma})$ where $\tilde{\mu} = W \mu$ and $\tilde{\Sigma} = W \Sigma W^\top$.

Next, we will show that $\tilde{\Sigma}$ is symmetric positive definite. Since
$\tilde{\Sigma} = W \Sigma W^\top = \tilde{\Sigma}^\top$, $\tilde{\Sigma}$ is symmetric. Since $\Sigma$ is positive-definite, $z^\top \tilde{\Sigma} z = z^\top W \Sigma W^\top z > 0$ only requires that we do not have $W^\top z = 0$ and therefore by invertibility of $W$, that we do not have $z = 0$. Thus, $z^\top \tilde{\Sigma} z = z^\top W \Sigma W^\top z > 0$ for all non-zero $z\in \mathbb{R}^d$, which by definition means that $\tilde{\Sigma}$ is positive-definite.

The assumptions of Proposition \ref{prop:PSD} (symmetric, positive-definite covariance) are met for this transformed space so for any $r \in \mathbb{N}$ we have
\begin{align*}
    \mathbb{E}_{y\sim \tilde{Q}}[\overbrace{y \otimes y^\top \otimes \cdots \otimes y^\top}^{(r \text{ terms})}] &=
    \mathbb{E}_{y\sim \tilde{P}}[\overbrace{y \otimes y^\top \otimes \cdots \otimes y^\top}^{(r \text{ terms})}]\\
    %
    %
    \mathbb{E}_{x \sim Q}[Wx \otimes (Wx)^\top \otimes \cdots \otimes (Wx)^\top] &= 
    \mathbb{E}_{x \sim P}[Wx \otimes (Wx)^\top \otimes \cdots \otimes (Wx)^\top]\\
    %
    %
    %
    W\mathbb{E}_{x \sim Q}[x \otimes x^\top \otimes \cdots \otimes x^\top] (W^\top \otimes \cdots \otimes W^\top) &= 
    W\mathbb{E}_{x \sim P}[x \otimes x^\top \otimes \cdots \otimes x^\top] (W^\top \otimes \cdots \otimes W^\top)\\
    \mathbb{E}_{x \sim Q}[x \otimes x^\top \otimes \cdots \otimes x^\top] &= 
    \mathbb{E}_{x \sim P}[x \otimes x^\top \otimes \cdots \otimes x^\top].
\end{align*}

Therefore we have that PSD applied to the transformed $y=Wx$ is equal to zero if and only if the moments of $P$ and $Q$ match up to order $r$.

\end{proof}





\end{document}